\newcommand{\E}{\mathbb{E}}
\newcommand{\bas}[1]{\begin{align*}#1\end{align*}}
\newcommand{\ba}[1]{\begin{align}#1\end{align}}
\newcommand{\cdotv}{\boldsymbol{\cdot}}
\newcommand{\distas}[1]{\mathbin{\overset{#1}{\kern\z@\sim}}}%
\newcommand{\beqs}{\vspace{0mm}\begin{eqnarray}}
\newcommand{\eeqs}{\vspace{0mm}\end{eqnarray}}
\newcommand{\barr}{\begin{array}}
\newcommand{\earr}{\end{array}}
\newcommand{\given}{\,|\,}
\newcommand{\lp}{\left(}
\newcommand{\rp}{\right)}
\newcommand{\cD}{\mathcal{D}}
\newtheorem{lemma}{Lemma}
\title{
Beta Diffusion
}
\author{%
 Mingyuan Zhou\thanks{Corresponding to: \texttt{mingyuan.zhou@mccombs.utexas.edu}\quad\\\indent~~ PyTorch code is available at: \url{https://github.com/mingyuanzhou/Beta-Diffusion}}, ~
 Tianqi Chen,~ Zhendong Wang, ~and Huangjie Zheng
 \\
 The University of Texas at Austin\\
 Austin, TX 78712 \\
}
\begin{document}

\maketitle

\begin{abstract}

We introduce beta diffusion, a novel generative modeling method that integrates demasking and denoising to generate data within bounded ranges. Using scaled and shifted beta distributions, beta diffusion utilizes multiplicative transitions over time to create both forward and reverse diffusion processes, maintaining beta distributions in both the forward marginals and the reverse conditionals, given the data at any point in time. Unlike traditional diffusion-based generative models relying on additive Gaussian noise and reweighted evidence lower bounds (ELBOs), beta diffusion is multiplicative and optimized with KL-divergence upper bounds (KLUBs) derived from the convexity of the KL divergence. We demonstrate that the proposed KLUBs are more effective for optimizing beta diffusion compared to negative ELBOs, which can also be derived as the KLUBs of the same KL divergence with its two arguments swapped. The loss function of beta diffusion, expressed in terms of Bregman divergence, further supports the efficacy of KLUBs for optimization. Experimental results on both synthetic data and natural images demonstrate the unique capabilities of beta diffusion in generative modeling of range-bounded data and validate the effectiveness of KLUBs in optimizing diffusion models, thereby making them valuable additions to the family of diffusion-based generative models and the optimization techniques used to train them.

\end{abstract}

\section{Introduction}

Diffusion-based deep generative models %
have been gaining traction recently. One representative example is Gaussian diffusion \citep{sohl2015deep,scorematching,ddpm,improvedscore,kingma2021variational} that uses a Gaussian Markov chain to gradually diffuse images into Gaussian noise for training. The learned reverse diffusion process, defined by a Gaussian Markov chain in reverse order, iteratively refines noisy inputs towards clean photo-realistic images. 
Gaussian diffusion %
can also be viewed from the lens of %
denoising score matching 
\citep{JMLR:v6:hyvarinen05a,vincent2011connection,scorematching,improvedscore} and stochastic differential equations~\citep{song2021scorebased}. 
They have shown remarkable success across a wide range of tasks, including but not limited to generating, restoring, and editing images \citep{Dhariwal2021DiffusionMB,ho2022cascaded,ramesh2022hierarchical,rombach2022high,saharia2022photorealistic,wang2023dr2,wang2023context}, transforming 2D to 3D \citep{poole2023dreamfusion,armandpour2023reimagine}, synthesizing audio \citep{chen2021wavegrad,kong2021diffwave,yang2023diffsound}, reinforcement learning~\citep{janner2022planning, wang2023diffusionrl, pearce2023imitating}, quantifying uncertainty \citep{han2022card}, %
and designing drugs and proteins \citep{shi2021learning,luo2022antigenspecific,jing2022torsional}.

Constructing a diffusion-based generative model often follows a general recipe \citep{sohl2015deep,ddpm,kingma2021variational,karras2022elucidating}. The recipe involves three basic steps: First,
defining a forward diffusion process that introduces noise into the data and corrupts it with decreasing signal-to-noise ratio (SNR) as time progresses from 0 to 1.
Second,
defining a reverse diffusion process that denoises the corrupted data as time reverses from 1 to 0.
Third,
discretizing the time interval from 0 to 1 into a finite number of intervals, and viewing the discretized forward and reverse processes as a fixed inference network and a learnable generator, respectively. Auto-encoding variational inference \citep{kingma2013auto,rezende2014stochastic} is then applied to optimize the parameters of the generator %
by minimizing a weighted negative ELBO that includes a Kullback--Leibler (KL) divergence-based loss term for each discretized reverse step. 

Although the general diffusion-modeling recipe is simple in concept, it requires access to the corrupted data at any time during the forward diffusion process given a data observation, as well as the analytic form of the conditional posterior for any earlier time given both a data observation and its corrupted version at the present time. The latter requirement, according to Bayes' rule, implies access to the analytical form of the distribution of a corrupted data observation at the present time given its value at any previous time. 
Linear operations of the Gaussian distributions naturally satisfy these requirements since they are conjugate to themselves with respect to their mean parameters. This means that the marginal form and the conditional distribution of the mean remain Gaussian when two Gaussian distributions are mixed. Similarly, the requirements can be met under the categorical distribution \citep{hoogeboom2021argmax,austin2021structured,gu2022vector,hu2022global} and Poisson distribution \citep{chen2023learning}. However, few additional distributions are known to meet these requirements and %
it remains uncertain whether negative ELBO would be the preferred loss.

While previous works have primarily used Gaussian, categorical, or Poisson distribution-based diffusion processes, this paper introduces beta diffusion as a novel addition to the family of diffusion-based generative models. 
Beta diffusion is specifically designed to generate data within bounded ranges. %
Its forward diffusion process is defined by the application of beta distributions in a multiplicative manner, whereas its reverse diffusion process is characterized by the use of scaled and shifted beta distributions.
Notably, the distribution at any point in time of the forward diffusion given a data observation remains a beta distribution. 
We illustrate the forward beta diffusion process in Figure~\ref{fig:forward}, which simultaneously adds noise to and masks the data, and the reverse one in Figure~\ref{fig:backward}, which iteratively performs demasking and denoising for data generation. We provide the details on how these images are obtained in Appendix~\ref{sec:plotdiffusion}.

\begin{figure}[t]
 \centering
 \includegraphics[width=1\linewidth]{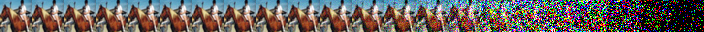}
 \includegraphics[width=1\linewidth]{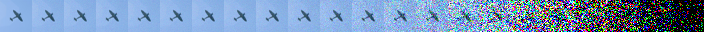}
 \vspace{-4.5mm}
 \caption{\small Illustration of the beta forward diffusion process for two example images. The first column displays the original images, while the other 21 columns display the images noised and masked by beta diffusion at time $t=0,0.05,
 \ldots, 1$, using $\eta=10000$ and the sigmoid diffusion schedule with $c_0=10$ and $c_1=-13$.} %
 \label{fig:forward}
 \vspace{3mm}

 \centering
 \includegraphics[width=1\linewidth]{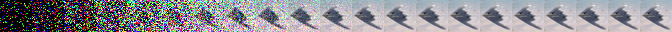}
 \includegraphics[width=1\linewidth]{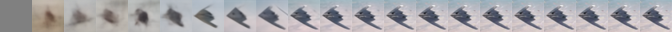}\\
 \vspace{.5mm}
 \includegraphics[width=1\linewidth]{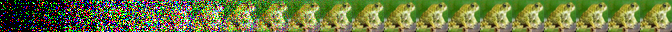}
 \includegraphics[width=1\linewidth]{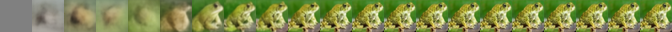}
 \vspace{-4.5mm}
 \caption{\small Illustration of reverse beta diffusion for two example generations. %
 The time $t$ 
 decreases from 1 to 0 when moving from left to right. In each image generation, the top row shows the trajectory of~$z_t$ (rescaled for visualization), which has been demasked and denoised using reverse diffusion, %
 whereas the bottom row shows $\hat x_0 =f_{\theta}(z_{t},t)$, whose theoretical optimal solution is equal to $\E[x_0\given z_{t}]$. See Appendix~\ref{sec:plotdiffusion} for more details. 
 }
 \label{fig:backward}
 \vspace{-3mm}
\end{figure}

Since the KL divergence between two %
beta distributions is analytic, one can follow the general recipe to define a negative ELBO to optimize beta diffusion.
However, our experiments show that minimizing the negative ELBO of beta diffusion can fail to optimally estimate the parameters of the reverse diffusion process. %
For each individual time-dependent KL loss term of the negative ELBO, examining it in terms of Bregman divergence \citep{banerjee2005clustering} reveals that the model parameters and corrupted data are placed in its first and second arguments, respectively. However, to ensure that the optimal solution under the Bregman divergence agrees with the expectation of the clean data given the corrupted data, the order of the two arguments must be swapped.

By swapping the Bregman divergence's two arguments, we obtain an upper bound on the KL divergence from the joint distribution of corrupted observations in the reverse chain to that in the forward chain. This bound arises from the convexity of the KL divergence. In addition, there exists another Bregman divergence that upper bounds the KL divergence from the univariate marginal of a corrupted observation in the reverse chain to that in the forward chain. These two Bregman divergences, which can be derived either through KL divergence upper bounds (KLUBs) or the logarithmically convex beta function, share the same optimal solution but have distinct roles in targeting reverse accuracy at each step or counteracting accumulation errors over the course of reverse diffusion. 
We further demonstrate that combining these two KLUBs presents a computationally viable substitution for a KLUB derived at the chain level, which upper bounds the KL divergence from the joint distribution of all latent variables in a forward diffusion chain to that of its reverse. 
Either KLUB works on its own, which is not unexpected as they both share the same optimal solutions in theory, but combining them could lead to the best overall performance.
In beta diffusion, the KL divergence is asymmetric, which enables us to derive an alternative set of KLUBs by swapping its two arguments. We will demonstrate that these augment-swapped KLUBs, which will be referred to as AS-KLUBs, essentially reduce to negative ELBOs. %
In Gaussian diffusion, the KL divergence is often made symmetric, resulting in KLUBs that are 
equivalent to (weighted) negative ELBOs.

Our main contributions are the introduction of beta diffusion as a novel diffusion-based multiplicative generative model for range-bounded data, as well as the proposal of KLUBs as effective loss objectives for optimizing diffusion models, in place of (weighted) negative ELBOs. Additionally, we introduce the log-beta divergence, a Bregman divergence corresponding to the differentiable and strictly convex log-beta function, as a useful tool for analyzing KLUBs. These contributions enhance the existing family of diffusion-based generative models and provide a new perspective on optimizing them.

\section{Beta Diffusion and Optimization via KLUBs}
We begin by specifying the general requirements for constructing a diffusion-based generative model and establish the notation accordingly \citep{sohl2015deep,ddpm,kingma2021variational}. Let $x_0$ denote the observed data, and let $z_s$ and $z_t$ represent their corrupted versions at time $s$ and time $t$, respectively, where $0 < s < t <1$. In the forward diffusion process, we require access to random samples from the marginal distribution $q(z_t \given x_0)$ at any time $t$, as well as an analytical expression of the probability density function (PDF) of the conditional distribution $q(z_s \given z_t, x_0)$ for any $s < t$.

The forward beta diffusion chain uses diffusion scheduling parameters $\alpha_t$ to control the decay of its expected value over the course of forward diffusion, given by $\E[z_t\given x_0]=\alpha_t x_0$, and a positive concentration parameter $\eta$ to control the tightness of the diffusion process around its expected value. We typically set $\alpha_t$ to approach 1 and 0 as $t$ approaches 0 and 1, respectively, and satisfy the condition 
\bas{
1\ge \alpha_0>\alpha_s>\alpha_t>\alpha_1\ge 0 \text{ for all } s\in(0,t),~t\in(0,1).
}
Let $\Gamma(\cdot)$ denote the gamma function and $B(\cdot,\cdot)$ denote the beta function.
The beta distribution $\mbox{Beta}(x;a,b) = B(a,b)^{-1} x^{a-1}(1-x)^{b-1}$ is a member of the exponential family \citep{alma991058189962406011,wainwright2008graphical,blei2017variational}. 
Its log partition function is a log-beta function as
 $\ln B(a,b) = \ln\Gamma(a)+\ln\Gamma(b)-\ln\Gamma(a+b),$ which is differentiable, and strictly convex on $(0,\infty)^2$ as a function of two variables~\citep{Dragomir2000InequalitiesFB}. As a result,
the KL divergence between two beta distributions can be expressed as the Bregman divergence associated with the log-beta function. 
Specifically, as in Appendix~\ref{app:logbeta}, 
one can show by their definitions that
\ba{
\mbox{KL}(\mbox{Beta}(\alpha_p,\beta_p)||\mbox{Beta}(\alpha_q,\beta_q))&=
\ln\frac{ B(\alpha_q,\beta_q) }{B(\alpha_p,\beta_p)} 
-( \alpha_q-\alpha_p,\beta_q-\beta_p)
 \begin{pmatrix}
 \nabla_{\alpha}\ln B(\alpha_p,\beta_p) \\
 \nabla_{\beta}\ln B(\alpha_p,\beta_p)
 \end{pmatrix}\notag\\
& 
=D_{\ln B(a,b)}((\alpha_q,\beta_q),(\alpha_p,\beta_p)).\label{eq:KL_Breg}
}
We refer to the above Bregman divergence %
as the log-beta divergence.
Moreover, if $(\alpha_q,\beta_q)$ are random variables, applying Proposition 1 of \citet{banerjee2005clustering}, we can conclude that the optimal value of $(\alpha_p,\beta_p)$ that minimizes this log-beta divergence is $(\alpha_p^*,\beta_p^*) = \E[(\alpha_q,\beta_q)]$.

Next, we introduce a conditional bivariate beta distribution, which given a data observation has (scaled and shifted) beta distributions for not only its two marginals but also two conditionals. These properties are important for developing the proposed diffusion model with multiplicative transitions.

\subsection{Conditional Bivariate Beta Distribution}
We first present the conditional bivariate beta distribution in the following Lemma, which generalizes 
previous results on the distribution of the product of independent beta random variables \citep{JambunathanM.V.1954SPoB,Kotlarski1962,johnson2005univariate}.

\begin{lemma}[Conditional Beta Bivariate Distribution]
\label{lem:Beta} 
Denote $(z_s,z_t)$ as variables over a pair of time points $(s,t)$, with $0< s< t<1$.
Given a random sample $x_0\in(0,1)$ from a probability-valued data distribution $p_{data}(x_0)$, we define a conditional bivariate beta distribution over $(z_s,z_t)$ with PDF: 
\ba{ %
q(z_{s},z_t\given x_0) = \frac{\Gamma(\eta) 
}{\Gamma(\eta \alpha_{t}x_0) \Gamma(\eta (1-\alpha_{s}x_0)) \Gamma(\eta (\alpha_{s}-\alpha_{t})x_0)} \frac{ z_t^{\eta \alpha_{t}x_0 - 1} (1-z_{s})^{\eta(1-\alpha_{s}x_0) - 1}}{({z_{s}-z_t})^{1-\eta (\alpha_{s}-\alpha_{t})x_0 }}.\label{eq:PDF}
}
\textbf{Marginals:} Given $x_0$, the two univariate marginals of this distribution
are both beta distributed as
\ba{
q(z_{s}\given x_0)&= \emph{\mbox{Beta}}(\eta\alpha_{s}x_0,\eta(1-\alpha_{s}x_0)),\label{eq:beta_1}\\
q(z_t\given x_0)&= \emph{\mbox{Beta}}(\eta\alpha_t x_0,\eta(1-\alpha_t x_0)).\label{eq:beta_0}
}
\textbf{Conditionals:} Given $x_0$, a random sample $(z_t,z_{s})$ from this %
distribution can be either
generated in forward order, by multiplying a beta variable from \eqref{eq:beta_1} with beta variable $\pi_{s\rightarrow t}$, as
\ba{
z_t = z_{s} \pi_{s\rightarrow t},~~
\pi_{s\rightarrow t}\sim\emph{\mbox{Beta}}(\eta\alpha_tx_0, \eta(\alpha_{s}-\alpha_t)x_0),~~z_{s}\sim q(z_{s}\given x_0),
\label{eq:beta_3}
}
or generated in reverse order, by combining a beta variable from \eqref{eq:beta_0} with 
beta variable $p_{s\leftarrow t}$, as
\ba{
z_{s} = z_t+(1-z_t)p_{s\leftarrow t},~~
p_{s\leftarrow t}\sim \emph{\mbox{Beta}}(\eta(\alpha_{s}-\alpha_t)x_0, \eta(1-{\alpha_{s}}x_0)),~~z_t\sim q(z_t\given x_0).
\label{eq:beta_2}
}
\end{lemma}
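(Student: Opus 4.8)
The plan is to recognize the joint density in \eqref{eq:PDF} as a linearly reparametrized Dirichlet distribution, after which all four assertions follow from standard Dirichlet facts. Writing $a = \eta\alpha_t x_0$, $b=\eta(\alpha_s-\alpha_t)x_0$, and $c=\eta(1-\alpha_s x_0)$, the first thing I would check is the bookkeeping identity $a+b+c=\eta$, which is exactly what makes the $\Gamma(\eta)$ prefactor the correct normalizing constant. With this notation the density reads $\frac{\Gamma(a+b+c)}{\Gamma(a)\Gamma(b)\Gamma(c)}\, z_t^{a-1}(z_s-z_t)^{b-1}(1-z_s)^{c-1}$ on the ordered region $0<z_t<z_s<1$.

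Next I would introduce the change of variables $(u_1,u_2,u_3)=(z_t,\,z_s-z_t,\,1-z_s)$. These three nonnegative quantities sum to $1$, so $(u_1,u_2,u_3)$ ranges over the $2$-simplex, and the map $(z_t,z_s)\mapsto(u_1,u_2)$ is linear with unit Jacobian determinant. Under this substitution the density becomes exactly $\dir(u_1,u_2,u_3;a,b,c)$. This single observation simultaneously confirms that \eqref{eq:PDF} integrates to $1$ and reduces the remaining claims to textbook properties of the Dirichlet.

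For the marginals \eqref{eq:beta_1}--\eqref{eq:beta_0} I would invoke the aggregation property: any partial sum of Dirichlet coordinates is beta distributed with grouped concentrations. Thus $z_s=u_1+u_2\sim\mbox{Beta}(a+b,c)=\mbox{Beta}(\eta\alpha_s x_0,\eta(1-\alpha_s x_0))$ and $z_t=u_1\sim\mbox{Beta}(a,b+c)=\mbox{Beta}(\eta\alpha_t x_0,\eta(1-\alpha_t x_0))$, using $a+b=\eta\alpha_s x_0$ and $b+c=\eta(1-\alpha_t x_0)$. For the two conditional factorizations I would use the neutrality property: the normalized pair $(u_1,u_2)/(u_1+u_2)$ is independent of the sum $u_1+u_2$, with $u_1/(u_1+u_2)\sim\mbox{Beta}(a,b)$; since $u_1/(u_1+u_2)=z_t/z_s$ and $u_1+u_2=z_s$, this is precisely the forward construction \eqref{eq:beta_3} with $\pi_{s\rightarrow t}=z_t/z_s$. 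Applying the same property to the last two coordinates gives $u_2/(u_2+u_3)\sim\mbox{Beta}(b,c)$ independent of $u_2+u_3=1-z_t$, and since $u_2/(u_2+u_3)=(z_s-z_t)/(1-z_t)$ this yields the reverse construction \eqref{eq:beta_2} with $p_{s\leftarrow t}=(z_s-z_t)/(1-z_t)$.

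I expect the only real subtlety to be orientation and Jacobian bookkeeping rather than any deep difficulty: one must track that the support is the ordered region $z_t<z_s$, confirm the linear map to the simplex is a bijection with unit Jacobian, and match the grouped concentration parameters carefully (using $a+b+c=\eta$ repeatedly). If one prefers to avoid quoting the Dirichlet neutrality property, the same conclusions follow by a direct change of variables: multiply the two independent beta densities in \eqref{eq:beta_3} (resp. \eqref{eq:beta_2}), apply the Jacobian of $(z_s,\pi_{s\rightarrow t})\mapsto(z_s,z_t)$ (resp. $(z_t,p_{s\leftarrow t})\mapsto(z_t,z_s)$), and verify that the product collapses to \eqref{eq:PDF}; this is the route that most directly exhibits the lemma as a generalization of the classical product-of-betas result.
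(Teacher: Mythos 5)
Your proof is correct, and it takes a genuinely different route from the paper's. The paper proves the lemma by brute-force density multiplication: it writes out the product of the scaled beta conditional $q(z_t\given z_s,x_0)$ with the marginal $q(z_s\given x_0)$ for the forward construction, does the same with $q(z_s\given z_t,x_0)$ and $q(z_t\given x_0)$ for the reverse construction, and verifies that both products simplify to the joint PDF in \eqref{eq:PDF} -- essentially the fallback computation you sketch in your final sentences. Your primary argument instead identifies $(z_t,\,z_s-z_t,\,1-z_s)$ as a $\mbox{Dirichlet}(a,b,c)$ vector with $a+b+c=\eta$ (via a unit-Jacobian linear map) and then reads off the marginals from the aggregation property and the two conditional factorizations from the neutrality property applied to the two groupings $\{u_1,u_2\}\,|\,\{u_3\}$ and $\{u_1\}\,|\,\{u_2,u_3\}$. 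All the parameter bookkeeping checks out ($a+b=\eta\alpha_s x_0$, $b+c=\eta(1-\alpha_t x_0)$, and the independence of the normalized within-group ratio from the group sum is exactly what makes the ``sample the marginal, then multiply/shift by an independent beta'' constructions valid). What your route buys is a structural explanation: it makes clear why the same bivariate distribution admits both a forward-multiplicative and a reverse-additive factorization, and why the constant $\Gamma(\eta)$ normalizes the density; it also immediately suggests the $n$-time-point generalization underlying the infinite divisibility remark in the paper. What the paper's route buys is self-containedness -- no Dirichlet facts need to be quoted, only elementary algebra on gamma functions and exponents.
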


The proof
starts with applying change of variables to obtain two scaled and shifted beta distributions
\ba{ 
\textstyle q(z_t\given z_{s},x_0)& \textstyle= \frac{1}{z_{s}} {\mbox{Beta}}\left(\frac{z_t}{z_{s}}; \eta\alpha_{t}x_0 , \eta(\alpha_{s}-\alpha_{t})x_0\rp,\label{eq:forward}\\
\textstyle q(z_{s}\given z_t,x_0)& = \textstyle\frac{1}{1-z_t} {\mbox{Beta}}\lp\frac{z_{s}-z_t}{1-z_t};\eta(\alpha_{s}-\alpha_{t})x_0 , \eta(1-\alpha_{s}x_0)\right), \label{eq:conditional}
} and then 
takes the products of them with their corresponding marginals, given by \eqref{eq:beta_1} and \eqref{eq:beta_0}, respectively, to show that 
the PDF of the joint distribution defined in \eqref{eq:beta_3} and that defined in \eqref{eq:beta_2} are both equal to the PDF of
$q(z_{s},z_{t}\given x_0)$ defined in \eqref{eq:PDF}. The detailed proof is provided in Appendix~\ref{sec:proof}.
To ensure numerical accuracy, we will calculate
 $z_{s} = z_t+(1-z_t)p_{s\leftarrow t}$ in \eqref{eq:beta_2} %
in the logit space~as
\ba{\label{eq:logitspace}
\mbox{logit}(z_s) %
=\ln \big( e^{ \text{logit}(z_t)} +e^{ \text{logit}(p_{s\leftarrow t})} + e^{ \text{logit}(z_t)+ \text{logit}(p_{s\leftarrow t})}\big).
}

\subsection{Continuous Beta Diffusion}

\textbf{Forward Beta Diffusion. } We can use the conditional bivariate beta distribution to construct a forward beta diffusion chain, beginning with the beta distribution from \eqref{eq:beta_1} and proceeding with the scaled beta distribution from \eqref{eq:forward}. The marginal at any time $t$ for a given data observation $x_0$, as shown in \eqref{eq:beta_0}, stays as beta-distributed in the forward chain. 
For the beta distribution given by \eqref{eq:beta_0}, we have
$$
\textstyle
\E[z_t\given x_0] = \alpha_t x_0,~~\mbox{var}[z_t\given x_0] = \frac{(\alpha_t x_0)(1-\alpha_t x_0)}{ \eta+1},~~\mbox{SNR}_t=\left(\frac{\E[z_t\given x_0]} {\text{std}[z_t\given x_0]}\right)^2= {\frac{\alpha_t x_0(\eta+1)}{1-\alpha_t x_0}}.
$$
Thus when $\alpha_t$ approaches 0 ($i.e.$, $t\rightarrow 1$), both $z_t$ and SNR$_t$ are shrunk towards 0, and if $\alpha_1=0$, we have $z_1\sim\mbox{Beta}(0,\eta)$, a degenerate beta distribution that becomes a unit point mass at 0.

\textbf{Infinite Divisibility. } We consider beta diffusion as a form of continuous diffusion, as its forward chain is infinitely divisible given $x_0$. This means that for any time $k\in(s,t)$, we can perform forward diffusion from $z_s$ to $z_t$ by first setting $z_k = z_s \pi_{s\rightarrow k}$, where $\pi_{s\rightarrow k}\sim \mbox{Beta}(\eta\alpha_k x_0,\eta(\alpha_k-\alpha_s)x_0)$, and then setting $z_t = z_k \pi_{k\rightarrow t}$, where $\pi_{k\rightarrow t}\sim \mbox{Beta}(\eta\alpha_t x_0,\eta(\alpha_t-\alpha_k)x_0)$. The same approach can be used to show the infinite divisibility of reverse beta diffusion given $x_0$.

\textbf{Reverse Beta Diffusion. } 
We follow Gaussian diffusion to use $q(z_s\given z_t,x_0)$ to help define $p_{\theta}(z_s\given z_t)$ \citep{ddim,kingma2021variational,xiao2021tackling}.
To construct a reverse beta diffusion chain, we will first need to learn how to reverse from $z_t$ to $z_s$, where $s<t$. If we know the $x_0$ %
used to sample $z_t$ as in \eqref{eq:beta_1}, then we can readily apply the conditional %
in \eqref{eq:conditional} to sample $z_s$. Since this information is unavailable %
during inference, we make a weaker assumption that we can exactly sample from 
$
z_t\sim q(z_t) = \E_{x_0\sim p_{data}}[q(z_t\given x_0)],
$
which is the ``$x_0$-free'' univariate marginal at time $t$. It is straightforward to sample during training but must be approximated during inference. 
Under this weaker assumption on $q(z_t)$, 
utilizing 
\eqref{eq:conditional} but replacing its true $x_0$ with an approximation $\hat{x}_0=f_{\theta}(z_t,t)$, where $f_{\theta}$ denotes the learned generator parameterized by $\theta$, we introduce our ``$x_0$-free'' and hence ``causal'' time-reversal distribution as
\ba{
p_{\theta}(z_s\given z_t) = q(z_s\given z_t,\hat{x}_0=f_{\theta}(z_t,t)).\label{eq:reversal}
}

\subsection{Optimization via KLUBs and Log-Beta Divergence}

\textbf{KLUB for Time Reversal. }
The time-reversal distribution $p_{\theta}(z_s\given z_t)$ reaches its optimal when its product with $q(z_t)$ becomes equivalent to $q(z_s,z_t)=\E_{x_0\sim p_{data}}[q(z_s,z_t\given x_0)]$, which is a marginal bivariate distribution %
that is ``$x_0$-free.'' Thus we propose to optimize $\theta$ by
minimizing
$ 
\mbox{KL}(p_{\theta}(z_s\given z_t)q(z_t)||q(z_s,z_t))
$ in theory but introduce a surrogate loss in practice:
\begin{lemma}[KLUB (conditional)]\label{lem:KLUB}
The KL divergence from $q(z_s,z_t)$ to $p_{\theta}(z_s\given z_t)q(z_t)$, two ``$x_0$-free'' joint distributions defined by forward and reverse diffusions, respectively, can be upper bounded: %
\ba{
\emph{\mbox{KL}}(p_{\theta}(z_s\given z_t)q(z_t)||q(z_s,z_t)) &\le \emph{\mbox{KLUB}}_{s,t} 
= \E_{(z_t,x_0)\sim q(z_t\given x_0)p_{data}(x_0)}[\emph{\mbox{KLUB}}(s,z_t, x_0)],\label{eq:KLUB0_0}\\
\emph{\mbox{KLUB}}(s,z_t,x_0)&=\emph{\mbox{KL}}(q(z_s\given z_t,\hat x_0 =f_{\theta}(z_t,t))||q(z_s\given z_t, x_0))].
\label{eq:KLUB0}
}
\end{lemma}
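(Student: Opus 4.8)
The plan is to combine two observations: the two joint distributions being compared share the same $z_t$-marginal, and the KL divergence is convex in its second argument. Together these collapse the joint KL into a single per-step conditional divergence, which the convexity bound then dominates.

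First I would note that both $p_{\theta}(z_s\given z_t)q(z_t)$ and $q(z_s,z_t)=q(z_s\given z_t)q(z_t)$ factor through the common marginal $q(z_t)$ (the latter because $\int q(z_s,z_t)\,dz_s=q(z_t)$). The shared factor cancels inside the log-ratio, so
\bas{
\mathrm{KL}(p_{\theta}(z_s\given z_t)q(z_t)\,\|\,q(z_s,z_t)) = \E_{z_t\sim q(z_t)}\!\left[\mathrm{KL}(p_{\theta}(z_s\given z_t)\,\|\,q(z_s\given z_t))\right].
}
This reduces the task to bounding the inner conditional KL.

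Next I would represent the second argument as a posterior-weighted mixture. By Bayes' rule $q(z_s\given z_t)=\E_{x_0\sim q(x_0\given z_t)}[\,q(z_s\given z_t,x_0)\,]$, where $q(x_0\given z_t)\propto q(z_t\given x_0)p_{data}(x_0)$. Since $\mathrm{KL}(p\,\|\,q)=-\E_{p}[\ln q]+\mathrm{const}$ and $q\mapsto-\ln q$ is convex, the map $q\mapsto\mathrm{KL}(p\,\|\,q)$ is convex for fixed $p$, so Jensen's inequality yields $\mathrm{KL}(p\,\|\,\E[q])\le\E[\mathrm{KL}(p\,\|\,q)]$ and hence
\bas{
\mathrm{KL}(p_{\theta}(z_s\given z_t)\,\|\,q(z_s\given z_t)) \le \E_{x_0\sim q(x_0\given z_t)}\!\left[\mathrm{KL}(p_{\theta}(z_s\given z_t)\,\|\,q(z_s\given z_t,x_0))\right].
}
Finally I would take $\E_{z_t\sim q(z_t)}$ of this inequality and use the identity $q(z_t)\,q(x_0\given z_t)=q(z_t\given x_0)\,p_{data}(x_0)$ to collapse the iterated expectation into a single one over $(z_t,x_0)\sim q(z_t\given x_0)p_{data}(x_0)$. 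Substituting the definition $p_{\theta}(z_s\given z_t)=q(z_s\given z_t,\hat x_0=f_{\theta}(z_t,t))$ from \eqref{eq:reversal} identifies each summand with $\mathrm{KLUB}(s,z_t,x_0)$ in \eqref{eq:KLUB0}, giving exactly $\mathrm{KLUB}_{s,t}$.

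The computations here are light; the real subtlety—and the point on which the paper's whole argument turns—is the \emph{orientation} of the KL. Convexity must be applied to the second argument, which is the true mixture $q(z_s\given z_t)$, while the learned $p_{\theta}$ stays in the first argument. This is exactly what places $\hat x_0$ in the first slot of every per-step log-beta divergence, so that the optimizer is driven toward $\E[x_0\given z_t]$; the opposite orientation would instead yield the AS-KLUB that reduces to the negative ELBO. A secondary check is the regularity needed to justify $\int q(z_s,z_t)\,dz_s=q(z_t)$ and the interchange of integration with the convexity inequality, which is routine here since every density in play is a genuine (scaled and shifted) beta PDF on a bounded support.
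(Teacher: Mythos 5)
Your proof is correct and reaches the same bound, but by a mildly different route than the paper's. The paper represents \emph{both} joint distributions as mixtures over the prior, writing $p_{\theta}(z_s\given z_t)q(z_t)=\E_{x_0\sim p_{data}(x_0)}[p_{\theta}(z_s\given z_t)q(z_t\given x_0)]$ and $q(z_s,z_t)=\E_{x_0\sim p_{data}(x_0)}[q(z_s,z_t\given x_0)]$, applies the \emph{joint} convexity of $\mbox{KL}(\cdot||\cdot)$ in both arguments at once, and only then cancels the common factor $q(z_t\given x_0)$ inside the log-ratio. You instead cancel the shared marginal $q(z_t)$ first, reduce to $\E_{z_t\sim q(z_t)}[\mbox{KL}(p_{\theta}(z_s\given z_t)||q(z_s\given z_t))]$, expand only the second argument as a \emph{posterior} mixture $q(z_s\given z_t)=\E_{x_0\sim q(x_0\given z_t)}[q(z_s\given z_t,x_0)]$, and invoke convexity of $\mbox{KL}(p||\cdot)$ in its second argument alone; the identity $q(z_t)q(x_0\given z_t)=q(z_t\given x_0)p_{data}(x_0)$ then reconciles the two expectations, so the endpoints agree. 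Your version buys a sharper diagnosis of what is actually needed here: because $p_{\theta}(z_s\given z_t)$ is ``$x_0$-free,'' second-argument convexity suffices, and making the posterior $q(x_0\given z_t)$ explicit renders the optimal solution $f_{\theta^*}(z_t,t)=\E[x_0\given z_t]$ of the subsequent lemma transparent. The paper's version is the more uniform template: the identical joint-convexity step, applied verbatim, also yields Lemma~\ref{lem:marginal}, Lemma~\ref{lem:KLUBchain}, and Lemma~\ref{lem:ASKLUB}, where the first argument of the KL is itself an $x_0$- (or $z_t$-) dependent mixture and second-argument convexity alone would not suffice. Your closing remarks on the orientation of the KL and on the routine regularity checks are accurate.
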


The proof in Appendix~\ref{sec:proof} utilizes the equation $p_{\theta}(z_s\given z_t)q(z_t) = \E_{x_0\sim p_{data}}[p_{\theta}(z_s\given z_t)q(z_t\given x_0)]$ and then applies the convexity of the KL divergence \citep{cover1991elements,yin2018semi} and the definition in \eqref{eq:reversal}.

\textbf{Log-Beta Divergence. }
To find out the optimal solution under KLUB, %
following \eqref{eq:KL_Breg}, we can express $\mbox{KLUB}(s,z_t,x_0) $ given by \eqref{eq:KLUB0} as a log-beta divergence as
\ba{
D_{\ln B(a,b)}\big\{\left[\eta( \alpha_{s}- {\alpha_t})x_0,\eta(1-\alpha_{s}x_0)\right],~\left[\eta( \alpha_{s}- {\alpha_t})f_{\theta}(z_t,t),\eta(1-\alpha_{s}f_{\theta}(z_t,t))\right]\big\}.
\label{eq:L0t_0}
}
We note $\mbox{KLUB}_{s,t}$ defined in 
\eqref{eq:KLUB0_0} can also be written as
$\E_{z_t\sim q(z_t)} \E_{x_0\sim q(x_0\given z_t)} [\mbox{KLUB}(s,z_t,x_0)],$ where the log-beta divergence for $\mbox{KLUB}(s,z_t,x_0)$, defined as in \eqref{eq:L0t_0}, includes $x_0\sim q(x_0\given z_t)$ in its first argument and the generator $f_{\theta}(z_t,t)$ in its second argument. Therefore, applying Proposition 1 of \citet{banerjee2005clustering}, 
 we %
 have the following Lemma.
 \begin{lemma}
 The objective $\emph{\mbox{KLUB}}_{s,t}$ defined in \eqref{eq:KLUB0_0} for any $s<t$
 is minimized when 
 \ba{
 f_{\theta^*}(z_t,t) = \E[x_0\given z_t] = \E_{x_0\sim q(x_0\given z_t)}[x_0]\label{eq:f_theta}~~~~ \text{ for all } z_t\sim q(z_t).
 }

 \end{lemma}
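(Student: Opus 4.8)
The plan is to reduce the minimization of $\mbox{KLUB}_{s,t}$ to a pointwise-in-$z_t$ application of the mean-minimizes-Bregman-divergence property (Proposition 1 of \citet{banerjee2005clustering}), exploiting the fact that the two arguments of the log-beta divergence in \eqref{eq:L0t_0} depend \emph{affinely} on $x_0$ and on the scalar $f_{\theta}(z_t,t)$, respectively.

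First I would use the rewriting $\mbox{KLUB}_{s,t}=\E_{z_t\sim q(z_t)}\E_{x_0\sim q(x_0\given z_t)}[\mbox{KLUB}(s,z_t,x_0)]$. Since the generator $f_{\theta}$ is free to assign an independent value at each input and the outer average over $z_t$ is an average of nonnegative terms (KL divergences), it suffices to minimize the inner conditional expectation separately for (almost) every fixed $z_t$. For fixed $z_t$, equation \eqref{eq:L0t_0} presents $\mbox{KLUB}(s,z_t,x_0)$ as a log-beta divergence $D_{\ln B(a,b)}((\alpha_q,\beta_q),(\alpha_p,\beta_p))$ whose first argument $(\alpha_q,\beta_q)=(\eta(\alpha_{s}-\alpha_t)x_0,\,\eta(1-\alpha_{s}x_0))$ is a random vector through $x_0\sim q(x_0\given z_t)$, and whose second argument $(\alpha_p,\beta_p)=(\eta(\alpha_{s}-\alpha_t)f_{\theta}(z_t,t),\,\eta(1-\alpha_{s}f_{\theta}(z_t,t)))$ is fixed by the scalar $f_{\theta}(z_t,t)$.

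Next I would invoke Proposition 1 of \citet{banerjee2005clustering}, which gives that $\E[D_{\ln B(a,b)}((\alpha_q,\beta_q),(\alpha_p,\beta_p))]$ is minimized over the (unconstrained) second argument at $(\alpha_p^*,\beta_p^*)=\E[(\alpha_q,\beta_q)]$. By linearity of expectation, and because both components of the first argument are affine in $x_0$,
\[
\E_{x_0\given z_t}[(\alpha_q,\beta_q)]=\big(\eta(\alpha_{s}-\alpha_t)\E[x_0\given z_t],~\eta(1-\alpha_{s}\E[x_0\given z_t])\big),
\]
which lies in $(0,\infty)^2$ because $x_0\in(0,1)$ forces $\E[x_0\given z_t]\in(0,1)$. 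The final step is to observe that this unconstrained optimum sits exactly on the one-parameter line $\{(\eta(\alpha_{s}-\alpha_t)c,\,\eta(1-\alpha_{s}c)):c\in(0,1)\}$ of admissible second arguments, and is attained by setting $f_{\theta}(z_t,t)=\E[x_0\given z_t]$; since $\alpha_{s}>\alpha_t$ and $\alpha_{s}>0$ under the scheduling conditions, the map $c\mapsto(\eta(\alpha_{s}-\alpha_t)c,\,\eta(1-\alpha_{s}c))$ is injective, so this choice is the unique minimizer. As this holds for every $z_t$, it minimizes $\mbox{KLUB}_{s,t}$, yielding $f_{\theta^*}(z_t,t)=\E[x_0\given z_t]$.

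The step I expect to be the crux is the last one: Proposition 1 supplies the \emph{unconstrained} Bregman minimizer in $(0,\infty)^2$, whereas the generator constrains $(\alpha_p,\beta_p)$ to a one-dimensional affine submanifold parameterized by $f_{\theta}(z_t,t)$. The argument closes only because the unconstrained minimizer $\E[(\alpha_q,\beta_q)]$ happens to land on that submanifold, which is precisely what the affine (rather than merely convex) parameterization of the first argument guarantees through linearity of expectation. Had the arguments been nonlinear in $x_0$, the conditional mean $\E[x_0\given z_t]$ would generally fail to be optimal, which is exactly why, as emphasized earlier in the paper, the two arguments of the divergence must be ordered as in \eqref{eq:L0t_0} and not swapped.
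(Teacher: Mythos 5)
Your proposal is correct and follows essentially the same route as the paper: rewrite $\mbox{KLUB}_{s,t}$ as $\E_{z_t\sim q(z_t)}\E_{x_0\sim q(x_0\given z_t)}[\mbox{KLUB}(s,z_t,x_0)]$, express each term as the log-beta Bregman divergence \eqref{eq:L0t_0} with $x_0$ in the first argument and $f_{\theta}(z_t,t)$ in the second, and invoke Proposition 1 of \citet{banerjee2005clustering}. Your additional observation --- that Proposition 1 only supplies the \emph{unconstrained} minimizer in $(0,\infty)^2$ and that the argument closes because the affine dependence on $x_0$ places $\E[(\alpha_q,\beta_q)]$ exactly on the one-parameter submanifold reachable by the generator --- is a point the paper leaves implicit, and it correctly fills that small gap.
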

 Thus under the KLUB$_{s,t}$-optimized $\theta^*$, we have $p_{\theta^*}(z_s\given z_t) = q(z_s\given z_t,\E[x_0\given z_t])$, which is different from the optimal solution of the original KL loss in \eqref{eq:KLUB0_0}, which is $p^*_{\theta}(z_s\given z_t)=
 q(z_s,z_t)/q(z_t) = q(z_s\given z_t) = \E_{x_0\sim q(x_0\given z_t)}[q(z_s\given z_t,x_0)]$. It is interesting to note that they only differ on whether the expectation is carried out inside or outside the conditional posterior.

 In practice, we need to control the gap between $p_{\theta^*}(z_s\given z_t)$ and $q(z_s\given z_t)$ and hence $s$ needs to be close to $t$.
 Furthermore, the assumption of having access to unbiased samples from the true marginal $q(z_t)$ is also rarely met. Thus we need to discretize the time from 1 to $t$ into sufficiently fine intervals and perform time-reversal sampling over these intervals. Specifically, we can start with $z_1=0$ and iterate \eqref{eq:reversal} over these intervals to obtain an approximate sample from $z_t\sim q(z_t)$. However, the error could accumulate along the way from $z_1$ to $z_t$, to which we present a solution below.

\textbf{KLUB for 
Error Accumulation Control. }
To counteract error accumulation during time reversal, we propose to approximate the true marginal $q(z'_t)$ using a ``distribution-cycle-consistency'' approach. 
This involves feeding a random sample $z_t$ from $q(z_t)$ into the generator $f_\theta$, followed by the forward marginal $q(z'_t\given \hat{x}_0 = f_{\theta}(z_t,t))$, with the aim of recovering the distribution $q(z'_t)$ itself. Specifically, we propose to approximate $q(z'_t)$ with
$
p_{\theta}(z'_t) := \E_{z_t\sim q(z_t)} [q(z'_t \given \hat{x}_0 = f_{\theta}(z_t,t))]
$ by minimizing $\mbox{KL}(p_{\theta}(z'_t)||q(z'_t))$ in theory, but introducing a surrogate loss in practice: %
\begin{lemma}[KLUB (marginal)]\label{lem:marginal}
The KL divergence $\emph{\mbox{KL}}(p_{\theta}(z'_t)||q(z'_t)) $ can be upper bounded:
\ba{
&\emph{\mbox{KL}}(p_{\theta}(z'_t)||q(z'_t)) 
\le \emph{\mbox{KLUB}}_t= \E_{(z_t,x_0)\sim q(z_t\given x_0)p_{data}(x_0)}[\emph{\mbox{KLUB}}(z_t,x_0)],\label{eq:KLUB1_0}\\
&\emph{\mbox{KLUB}}(z_t,x_0) = \emph{\mbox{KL}}( q(z'_t \given f_{\theta}(z_t,t))||q(z'_t\given x_0)).\label{eq:KLUB1}
}
\end{lemma}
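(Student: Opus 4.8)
The plan is to mirror the proof of Lemma~\ref{lem:KLUB}, replacing the bivariate conditional by the univariate marginal and invoking the joint convexity of the KL divergence once more. The only ingredient I need is that both $p_{\theta}(z'_t)$ and the true marginal $q(z'_t)$ can be written as mixtures against the \emph{same} mixing measure, namely the forward joint $q(z_t,x_0)=q(z_t\given x_0)p_{data}(x_0)$. I would therefore spend the proof arranging both densities into that common form and then apply convexity termwise.

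First I would rewrite $p_{\theta}(z'_t)$. By its definition $p_{\theta}(z'_t)=\E_{z_t\sim q(z_t)}[q(z'_t\given f_{\theta}(z_t,t))]$, and since $q(z_t)=\int q(z_t\given x_0)p_{data}(x_0)\,dx_0$ while the integrand $q(z'_t\given f_{\theta}(z_t,t))$ depends on $z_t$ alone, I can append a trivial integration over $x_0$ at no cost:
\bas{
p_{\theta}(z'_t)=\int q(z'_t\given f_{\theta}(z_t,t))\,q(z_t,x_0)\,dz_t\,dx_0.
}
Symmetrically, starting from $q(z'_t)=\int q(z'_t\given x_0)p_{data}(x_0)\,dx_0$ and using $\int q(z_t\given x_0)\,dz_t=1$, I would append a trivial integration over $z_t$, since $q(z'_t\given x_0)$ is $z_t$-free:
\bas{
q(z'_t)=\int q(z'_t\given x_0)\,q(z_t,x_0)\,dz_t\,dx_0.
}

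With both sides now expressed as averages of the conditionals $q(z'_t\given f_{\theta}(z_t,t))$ and $q(z'_t\given x_0)$ against the identical measure $q(z_t,x_0)$, I would apply the joint convexity of the KL divergence (the same fact used in Lemma~\ref{lem:KLUB}, via \citep{cover1991elements,yin2018semi}) to move the KL inside the expectation:
\bas{
\mbox{KL}(p_{\theta}(z'_t)||q(z'_t))\le \E_{(z_t,x_0)\sim q(z_t,x_0)}\big[\mbox{KL}(q(z'_t\given f_{\theta}(z_t,t))||q(z'_t\given x_0))\big],
}
which is exactly $\mbox{KLUB}_t$ as claimed in \eqref{eq:KLUB1_0}--\eqref{eq:KLUB1}.

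The computational content here is negligible; the one step requiring care is the \emph{asymmetric} insertion of the auxiliary integration variables — appending $x_0$ to $p_{\theta}(z'_t)$ and $z_t$ to $q(z'_t)$ — so that the two mixtures share an identical mixing measure and convexity can be applied factor by factor. This is the same maneuver that powers the conditional KLUB, so I anticipate no genuine obstacle beyond this bookkeeping, and in particular no need to touch the explicit beta densities at all.
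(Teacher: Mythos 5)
Your proposal is correct and matches the paper's own proof: the paper likewise rewrites both $p_{\theta}(z'_t)$ and $q(z'_t)$ as mixtures over the common measure $q(z_t\given x_0)p_{data}(x_0)$ and then applies the convexity of the KL divergence. The "asymmetric insertion" of the auxiliary variables that you flag is exactly the step the paper performs implicitly when it states the two re-expressions before invoking convexity.
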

The proof in Appendix~\ref{sec:proof} utilizes the fact that %
$q(z_t') = \E_{(z_t,x_0)\sim q(z_t\given x_0)p_{data}(x_0)}[q(z_t'\given x_0)]$.

Note that the mathematical definition of KLUB is reused throughout the paper and can refer to any of the equations \eqref{eq:KLUB0_0}, \eqref{eq:KLUB0}, \eqref{eq:KLUB1_0}, or \eqref{eq:KLUB1} depending on the context. Similar to previous analysis, we have 
\ba{
\mbox{KLUB}(z_t,x_0) = D_{\ln B(a,b)}\big\{\left[\eta\alpha_t x_0,\eta(1-\alpha_tx_0)\right],~\left[\eta \alpha_t f_{\theta}(z_t,t),\eta(1-\alpha_{t}f_{\theta}(z_t,t))\right]\big\},
\label{eq:L0t_1}
}
and can conclude with the following Lemma.
\begin{lemma}
$\emph{\mbox{KLUB}}_t$ in \eqref{eq:KLUB1_0} is optimized when the same optimal solution given by \eqref{eq:f_theta} is met.
\end{lemma}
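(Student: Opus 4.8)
The plan is to mirror the argument that established the earlier minimization Lemma for $\mbox{KLUB}_{s,t}$, exploiting the fact that $\mbox{KLUB}(z_t,x_0)$ has already been rewritten as a log-beta divergence in \eqref{eq:L0t_1}. First I would re-express the outer expectation in \eqref{eq:KLUB1_0} by factoring the joint distribution the other way, $q(z_t\given x_0)p_{data}(x_0)=q(x_0\given z_t)q(z_t)$, so that $\mbox{KLUB}_t = \E_{z_t\sim q(z_t)}\E_{x_0\sim q(x_0\given z_t)}[\mbox{KLUB}(z_t,x_0)]$. Because the outer expectation over $z_t$ is a nonnegative average, it suffices to minimize the inner expectation separately for each fixed $z_t$ in the support of $q(z_t)$.

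For fixed $z_t$, I would examine the log-beta divergence in \eqref{eq:L0t_1}. The first argument $(\eta\alpha_t x_0,\eta(1-\alpha_t x_0))$ is a deterministic affine function of the random variable $x_0\sim q(x_0\given z_t)$, while the second argument $(\eta\alpha_t f_{\theta}(z_t,t),\eta(1-\alpha_t f_{\theta}(z_t,t)))$ is the quantity we optimize, controlled entirely by the scalar $f_{\theta}(z_t,t)$. Applying Proposition 1 of \citet{banerjee2005clustering}, exactly as was done for $\mbox{KLUB}_{s,t}$, the unconstrained minimizer of an expected Bregman divergence places in its second argument the expectation of the first argument, namely $\E_{x_0\sim q(x_0\given z_t)}[(\eta\alpha_t x_0,\eta(1-\alpha_t x_0))]=(\eta\alpha_t\E[x_0\given z_t],\eta(1-\alpha_t\E[x_0\given z_t]))$.

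The key step is to observe that this unconstrained minimizer lies exactly on the one-parameter affine family $\{(\eta\alpha_t c,\eta(1-\alpha_t c)): c\in(0,1)\}$ traced out as $f_{\theta}(z_t,t)$ ranges over its admissible values: it is attained at $c=\E[x_0\given z_t]$. Hence the minimization constrained to admissible generator outputs coincides with the unconstrained Bregman minimum, and matching either coordinate gives $f_{\theta}(z_t,t)=\E[x_0\given z_t]$ (using $\eta\alpha_t>0$). Since this holds for every $z_t$ in the support of $q(z_t)$, the optimal solution is precisely \eqref{eq:f_theta}.

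I expect the main obstacle to be the verification just described: Proposition 1 of \citet{banerjee2005clustering} delivers the minimizer over an unconstrained second argument in $(0,\infty)^2$, whereas here the second argument is restricted to the one-dimensional affine image of the generator. The argument goes through only because the component-wise expectation of the first argument happens to remain on that affine image, a consequence of the marginals $q(z'_t\given x_0)$ and $q(z'_t\given f_\theta)$ sharing the same affine parameterization in the single scalar $\alpha_t x_0$. This is exactly why $\mbox{KLUB}_t$ shares the optimal solution \eqref{eq:f_theta} with $\mbox{KLUB}_{s,t}$: both reduce to matching the conditional mean $\E[x_0\given z_t]$, the only free degree of freedom.
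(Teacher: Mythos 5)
Your proof is correct and follows essentially the same route as the paper: rewrite $\mbox{KLUB}_t$ as $\E_{z_t\sim q(z_t)}\E_{x_0\sim q(x_0\given z_t)}$ of the log-beta divergence in \eqref{eq:L0t_1} and invoke Proposition 1 of \citet{banerjee2005clustering}. Your additional check that the unconstrained Bregman minimizer lands on the affine family $\{(\eta\alpha_t c,\eta(1-\alpha_t c))\}$ traced by the generator is a point the paper leaves implicit, and it is a worthwhile piece of rigor.
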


\textbf{Optimization via KLUBs. } With the two KLUBs for both time reversal and error accumulation control, whose optimal solutions are the same as in \eqref{eq:f_theta}, we are ready to optimize the generator $f_{\theta}$ via stochastic gradient descent (SGD). %
Specifically, denoting $\omega,\pi\in[0,1]$ as two weight coefficients, %
the loss term for the $i$th data observation $x_0^{(i)}$ in a mini-batch can be computed as
 \begin{align}
&~~~~~~~~~~~~~~~~~~\mathcal L_i = \omega \mbox{KLUB}(s_i,z_{t_i},x_0^{(i)}) + (1-\omega) \mbox{KLUB}(z_{t_i},x_0^{(i)}),\label{eq:loss}\\ 
&z_{t_i}\sim q(z_{t_i}\given x_0^{(i)})={\mbox{Beta}}(\eta\alpha_{t_i}x_0^{(i)},\eta(1-\alpha_{t_i}x_0^{(i)})),%
~~s_i=\pi t_i,~~
~~t_i\sim\mbox{Unif}(0,1).\notag
\end{align}

\subsection{Discretized Beta Diffusion for Generation of Range-Bounded %
Data}

For generating %
range-bounded
data, we
 discretize the beta diffusion chain. %
Denote $z_{t_0}=1$ and let $t_j$ increase with $j$. Repeating \eqref{eq:beta_3} for $T$ times, %
we define a discretized forward beta diffusion chain: 
\ba{
\resizebox{.93\textwidth}{!}{$
q(z_{t_{1:T}}\given x_0)\textstyle = %
\prod_{j=1}^T q(z_{t_j}\given z_{t_{j-1}},x_0) =\prod_{j=1}^T\frac{1}{z_{t_{j-1}}} \mbox{Beta}\left(\frac{z_{t_{j}}}{z_{t_{j-1}}};\eta\alpha_{t_{j}}x_0, \eta(\alpha_{t_{j-1}}-\alpha_{t_{j}})x_0\right).$}
\label{eq:beta_forward}
}
A notable feature of \eqref{eq:beta_forward} %
is that the marginal at any discrete time step $t_j$ follows a beta distribution, similarly defined as in \eqref{eq:beta_0}.
We also note while $q(z_{t_{1:T}}\given x_0)$ defines a Markov chain, %
the marginal
\ba{
q(z_{t_{1:T}}) =%
\E_{x_0\sim p_{data}(x_0)}[q(z_{t_{1:T}}\given x_0)]\label{eq:q_marginal}
}
in general does not. 
Unlike in beta diffusion, where %
the transitions between $z_t$ and $z_{t-1}$ are applied multiplicatively, in Gaussian diffusion, the transitions between $z_t$ and $z_{t-1}$ are related to each other additively and $z_{t_{1:T}}$ forms a Markov chain regardless of whether $x_0$ is marginalized out.

The discretized forward beta diffusion chain given by \eqref{eq:beta_forward} is reversible assuming knowing~$x_0$. 
This means given $x_0$, it
can be equivalently sampled in reverse order by first sampling $z_{t_{T}}\sim q(z_{t_{T}}\given x_0)={\mbox{Beta}}(\eta\alpha_{t_{T}} x_0,\eta(1-\alpha_{t_{T}} x_0))$ and then repeating~\eqref{eq:conditional} for ${t_{T}},\ldots,{t_{2}}$, with PDF %
$ %
q(z_{t_{1:T}}\given x_0)= %
q(z_{t_{T}}\given x_0)
\prod_{j=2}^T
 \frac{1}{1-z_{t_{j}}}\mbox{Beta}\left(\frac{z_{t_{j-1}}-z_{t_{j}}}{1-z_{t_{j}}};\eta( \alpha_{t_{j-1}}- {\alpha_{t_{j}}})x_0,\eta(1-\alpha_{t_{j-1}}x_0)\right). \notag %
$ %
This non-causal chain, while not useful by itself, serves as a blueprint for approximate generation.

Specifically, we approximate the %
marginal given by \eqref{eq:q_marginal}
with a Markov chain in reverse order as
\ba{
\resizebox{.93\textwidth}{!}{$
 \textstyle p_{\theta}(z_{t_{1:T}})= p_{prior}(z_{t_T})\prod_{j=2}^{T} p_{\theta}(z_{t_{j-1}}\given z_{t_j})=p_{prior}(z_{t_T})\prod_{j=2}^{T}q(z_{t_{j-1}}\given z_{t_{j}}, \,f_{\theta}(z_{t_{j}},\alpha_{t_{j}})).$}
 \label{eq:reverse_T}
}
To start the reverse process, we choose to approximate $q(z_{t_T})=\E_{x_0\sim p_{data}(x_0)}[q(z_{t_{T}}\given x_0)]$
with
$p_{prior}(z_{t_T})=q(z_{t_{T}}\given \E[x_0])$,
which means we let %
$z_{t_{T}}\sim \mbox{Beta}(\eta\alpha_{t_{T}} \E[x_0],\eta(1-\alpha_{t_{T}} \E[x_0]))$.
To sample $z_{t_{1:T-1}}$, we use the remaining terms in \eqref{eq:reverse_T}, which are scaled and shifted beta distributions that are specified as in \eqref{eq:conditional} and can be sampled as in \eqref{eq:beta_2} and \eqref{eq:logitspace}.

\textbf{KLUB for Discretized Beta Diffusion. } An optimized generator is expected to make the ``$x_0$-free'' joint distribution over all $T$ steps of the discretized reverse beta diffusion chain, expressed as $p_{\theta}(z_{t_{1:T}})$, to approach that of the discretized forward chain, expressed as $q(z_{t_{1:T}})$. Thus an optimized $\theta$ is desired to minimize the KL divergence 
$ %
\mbox{KL}(p_{\theta}(z_{t_{1:T}})||q(z_{t_{1:T}})). %
$ %
While this KL loss is in general intractable to compute, it can also be bounded using the KLUB shown as follows.

\begin{lemma}[KLUB for discretized diffusion chain]\label{lem:KLUBchain}
\label{lem:convexity_inference} %
$\emph{\mbox{KL}}(p_{\theta}(z_{t_{1:T}})||q(z_{t_{1:T}})) $ is upper bounded by
\ba{
&\emph{\mbox{KLUB}} = \E_{x_0\sim p_{data}(x_0)} 
\left[
\emph{\mbox{KL}}(p_{prior}(z_{t_T})||q(z_{t_T}\given x_0))\right]+
\textstyle\sum_{j=2}^{T}\widetilde{\emph{\mbox{KLUB}}}_{t_{j-1},t_j}~~~\text{where} \label{eq:KLUB_chain}\\
&
\resizebox{.95\textwidth}{!}{$
\widetilde{\emph{\mbox{KLUB}}}_{s,t} = \E_{ (z_t,x_0)\sim p_{\theta}(z_{t}\given x_0)p_{data}(x_0)}[
\emph{\mbox{KLUB}}(s,z_{t},x_0)],~~\emph{\mbox{KLUB}}(s,z_t,x_0)=\emph{\mbox{KL}}(p_{\theta}(z_s\given z_t)||q(z_s\given z_t, x_0))]$}
. \notag
}
\end{lemma}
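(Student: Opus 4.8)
The plan is to combine two standard facts about the KL divergence that already drove the proofs of the conditional and marginal KLUBs (Lemmas~\ref{lem:KLUB} and~\ref{lem:marginal}): its joint convexity, and the chain-rule/telescoping identity for distributions that factorize as Markov chains. The whole argument is a chain-level version of those earlier lemmas.

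First I would introduce the conditioning on $x_0$ by convexity. Since the reverse chain $p_{\theta}(z_{t_{1:T}})$ does not depend on $x_0$, it can be written as a trivial $p_{data}$-mixture, $p_{\theta}(z_{t_{1:T}}) = \E_{x_0\sim p_{data}}[p_{\theta}(z_{t_{1:T}})]$, whereas the forward marginal is the genuine mixture $q(z_{t_{1:T}}) = \E_{x_0\sim p_{data}}[q(z_{t_{1:T}}\given x_0)]$ from \eqref{eq:q_marginal}. Applying the joint convexity of $\mbox{KL}$ to these two mixtures with the common weights $p_{data}(x_0)$ gives
\[
\mbox{KL}(p_{\theta}(z_{t_{1:T}})||q(z_{t_{1:T}})) \le \E_{x_0\sim p_{data}}[\mbox{KL}(p_{\theta}(z_{t_{1:T}})||q(z_{t_{1:T}}\given x_0))],
\]
which is exactly the same convexity step used before, now lifted to the full joint over $z_{t_{1:T}}$.

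Next, for a fixed $x_0$ both arguments factorize as reverse Markov chains of identical structure: $p_{\theta}(z_{t_{1:T}}) = p_{prior}(z_{t_T})\prod_{j=2}^{T} p_{\theta}(z_{t_{j-1}}\given z_{t_j})$ by \eqref{eq:reverse_T}, and, using the reversibility of the forward chain given $x_0$ noted in the excerpt, $q(z_{t_{1:T}}\given x_0) = q(z_{t_T}\given x_0)\prod_{j=2}^{T} q(z_{t_{j-1}}\given z_{t_j},x_0)$. I would then expand $\mbox{KL}(p_{\theta}||q(\cdot\given x_0)) = \E_{p_{\theta}}[\log(p_{\theta}/q)]$, split the log of the product into a sum of log-ratios, and marginalize each term under $p_{\theta}$. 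This telescopes into the terminal term $\mbox{KL}(p_{prior}(z_{t_T})||q(z_{t_T}\given x_0))$ plus per-step terms $\E_{z_{t_j}\sim p_{\theta}(z_{t_j})}[\mbox{KL}(p_{\theta}(z_{t_{j-1}}\given z_{t_j})||q(z_{t_{j-1}}\given z_{t_j},x_0))]$, where $p_{\theta}(z_{t_j})$ is the marginal of $z_{t_j}$ under the reverse chain and the bracketed quantity is precisely $\mbox{KLUB}(t_{j-1},z_{t_j},x_0)$ from the statement.

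Finally I would take the expectation over $x_0\sim p_{data}$ and match terms with \eqref{eq:KLUB_chain}: the terminal term reproduces the first summand, and for each per-step term the combined expectation $\E_{x_0}\E_{z_{t_j}\sim p_{\theta}(z_{t_j})}$ is rewritten as the coupled expectation $\E_{(z_{t_j},x_0)\sim p_{\theta}(z_{t_j}\given x_0)p_{data}(x_0)}$ defining $\widetilde{\mbox{KLUB}}_{t_{j-1},t_j}$. The main obstacle is conceptual rather than computational: one must be careful that the chain-rule expansion produces expectations under the \emph{reverse}-chain marginal $p_{\theta}(z_{t_j})$ (not the forward marginal), and that the notation $p_{\theta}(z_t\given x_0)$ appearing in $\widetilde{\mbox{KLUB}}$ is only a bookkeeping device whose required property is $\E_{x_0\sim p_{data}}[p_{\theta}(z_t\given x_0)] = p_{\theta}(z_t)$ (equivalently, one may read $p_{\theta}(z_t\given x_0)=p_{\theta}(z_t)$). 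It is exactly this identity that lets the intractable marginal expectation be re-expressed as a coupled $(z_t,x_0)$ expectation and keeps the surrogate sampleable in practice.
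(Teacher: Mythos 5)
Your overall strategy is the same as the paper's: bound the KL between the two $x_0$-mixtures by the $p_{data}$-expected KL via convexity, then telescope the two reverse-Markov factorizations $p_{\theta}(z_{t_{1:T}})=p_{prior}(z_{t_T})\prod_j p_{\theta}(z_{t_{j-1}}\given z_{t_j})$ and $q(z_{t_{1:T}}\given x_0)=q(z_{t_T}\given x_0)\prod_j q(z_{t_{j-1}}\given z_{t_j},x_0)$ into a terminal term plus per-step conditional KLs. Each of your steps is individually sound. The one substantive divergence is the coupling you choose in the convexity step. The paper writes $p_{\theta}(z_{t_{1:T}})=\E_{x_0\sim p_{data}}[p_{\theta}(z_{t_{1:T}}\given x_0)]$ with a \emph{nontrivial} conditional and consequently lands on per-step expectations under the posterior marginal $p_{\theta}(z_{t_j}\given x_0)\propto p(x_0\given z_{t_j})\,p_{\theta}(z_{t_j})$ --- this is precisely the object the lemma's $\widetilde{\mbox{KLUB}}_{s,t}$ refers to, and precisely why the paper immediately declares it intractable and substitutes $q(z_{t_j}\given x_0)$. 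You instead use the trivial mixture $p_{\theta}(z_{t_{1:T}})=\E_{x_0\sim p_{data}}[p_{\theta}(z_{t_{1:T}})]$, which is just convexity of KL in its second argument, and you therefore obtain per-step expectations under the unconditional reverse marginal $p_{\theta}(z_{t_j})$, i.e.\ the product measure $p_{data}(x_0)\otimes p_{\theta}(z_{t_j})$ rather than the coupled one. That is a valid upper bound on the same KL, but it is not literally the bound stated in the lemma, and your proposed reconciliation --- reading $p_{\theta}(z_t\given x_0)$ as $p_{\theta}(z_t)$ --- contradicts the paper's explicit definition of that conditional. To recover the stated bound you would need to carry the nontrivial conditional through both the convexity step and the telescoping (so that the inner expectation is over $z_{t_j}\sim p_{\theta}(z_{t_j}\given x_0)$). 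In fairness, your version has a technical advantage: the paper's opening identity $\E_{x_0\sim p_{data}}[p_{\theta}(z_{t_{1:T}}\given x_0)]=p_{\theta}(z_{t_{1:T}})$ only holds exactly when the mixing measure matches $p_{\theta}(x_0)$, whereas your trivial-coupling inequality needs no such assumption; and both bounds motivate the identical practical surrogate. So you have proved a correct sibling inequality by essentially the paper's method, but not, under the paper's own notation, the inequality that is actually claimed.
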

We provide the proof in Appendix~\ref{sec:proof}. We note Lemma~\ref{lem:KLUBchain} is a general statement applicable for any diffusion models with a discrete forward chain $q(z_{t_{1:T}}\given x_0)\textstyle = 
\prod_{j=1}^T q(z_{t_j}\given z_{t_{j-1}},x_0) $ and a discrete reverse chain $p_{\theta}(z_{t_{1:T}})= p_{prior}(z_{t_T})\prod_{j=2}^{T} p_{\theta}(z_{t_{j-1}}\given z_{t_j})$. To estimate the KLUB in \eqref{eq:KLUB_chain}, however, during training, one would need to sample $z_{t_j}\sim p_{\theta}(z_{t_j}\given x_0)\propto p(x_0\given z_{t_j})p_{\theta}(z_{t_j})$, which is often infeasible. %
If we replace $z_{t_j}\sim p_{\theta}(z_{t_j}\given x_0)$ with $z_{t_j}\sim q(z_{t_j}\given x_0)$, then $\widetilde{{\mbox{KLUB}}}_{s,t}$ becomes the same as ${{\mbox{KLUB}}}_{s,t}$ given by \eqref{eq:KLUB0_0}, and ${{\mbox{KLUB}}}_{t}$ given by \eqref{eq:KLUB1_0} can be considered to remedy the impact of approximating $%
p_{\theta}(z_{t_j})$ with $%
q(z_{t_j})$. Therefore, we can consider the combination of ${{\mbox{KLUB}}}_{s,t}$ given by \eqref{eq:KLUB0_0} and ${{\mbox{KLUB}}}_{t}$ given by \eqref{eq:KLUB1_0} as a computationally viable solution to compute $\widetilde{{\mbox{KLUB}}}_{s,t}$, which hence justifies the use of the loss in \eqref{eq:loss} to optimize the discretized reverse beta diffusion chain. We summarize the training and sampling algorithms of beta diffusion in Algorithms~\ref{algo:train} and \ref{algo:sample}, respectively.

\subsection{Argument-swapped KLUBs and Negative ELBOs }
We note that in theory, instead of the KL divergences in \eqref{eq:KLUB0_0} and \eqref{eq:KLUB1_0}, $f_\theta$ can also be optimized using two argument-swapped KL divergences: $\mbox{KL}(q(z_t,z_s)||p_{\theta}(z_s|z_t)q(z_t))$ and $\mbox{KL}(q(z'_t)||p_{\theta}(z'_t))$.
By the same analysis, these KL divergences can also be bounded by KLUBs and log-beta divergences that are equivalent to the previous ones, but with swapped arguments. The argument-swapped KLUBs and log-beta divergences will be shown to be closely related to optimizing a discretized beta reverse diffusion chain via $-$ELBO, but they do not guarantee an optimal solution that satisfies \eqref{eq:f_theta} in beta diffusion and are found to provide clearly inferior empirical performance. %

\textbf{Negative ELBO for Discretized Beta Diffusion. }
As an alternative to KLUB, one can also consider following the convention in diffusion modeling to minimize the negative ELBO, expressed as
\ba{
&-\E_{x_0\sim p_{data}(x_0)}\ln p_{\theta}(x_0)\le \textstyle -\mbox{ELBO} = 
\E_{x_0\sim p_{data}(x_0)}\E_{q(z_{t_{1:T}}\given x_0)} \left[-\ln \frac{p(x_0\given z_{t_{1:T}})p_{\theta}(z_{t_{1:T}})}{q(z_{t_{1:T}}\given x_0)}\right]\notag\\
&=\resizebox{.95\textwidth}{!}{$-\E_{x_0}\E_{q(z_{t_1}\given x_0)}\ln p(x_0\given z_{t_1})+\E_{x_0} \mbox{KL}[q(z_{t_T}\given x_0)||p_{prior}(z_{t_T})]
\textstyle+
\sum_{j=2}^T \E_{x_0}\E_{q(z_{t_j}\given x_0)}[L(t_{j-1},z_{t_j},x_0)],$}\notag
}
where the first two terms are often ignored and the focus is placed on the remaining $T-2$ terms as
\ba{
&L(t_{j-1},z_{t_j},x_0) = \mbox{KL}(q(z_{t_{j-1}}\given z_{t_j},x_0)||q(z_{t_{j-1}}\given z_{t_j},\hat{x}_0=f_{\theta}(z_{t_j},t_j)))\notag\\
&
\resizebox{.92\textwidth}{!}{$
=D_{\ln B(a,b)}\Big\{\big[\eta( \alpha_{t_{j-1}}- {\alpha_{t_j}})f_{\theta}(z_{t_j},t_j),\eta(1-\alpha_{t_{j-1}}f_{\theta}(z_{t_j},t_j))\big],~~\big[\eta( \alpha_{t_{j-1}}- {\alpha_{t_j}})x_0,\eta(1-\alpha_{t_{j-1}}x_0)\big]\Big\}
$}.
\label{eq:L1t}
} 
\begin{lemma}[$-$ELBO and argument-swapped KLUB]\label{lem:ASKLUB} 
Optimizing the generator $f_{\theta}$ with $-$ELBO is equivalent to using an upper-bound for the augment-swapped KL divergence $\text{KL}(q(z_{t_{1:T}}) || p_{\theta}(z_{t_{1:T}}))$.

\end{lemma}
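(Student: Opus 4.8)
The plan is to derive the argument-swapped KLUB by mirroring the convexity argument of Lemma~\ref{lem:KLUBchain} and then to recognize the resulting bound as the $\theta$-dependent part of the negative ELBO displayed just above the lemma. First I would apply the joint convexity of the KL divergence to the swapped trajectory-level divergence. Since $q(z_{t_{1:T}}) = \E_{x_0\sim p_{data}}[q(z_{t_{1:T}}\given x_0)]$ is a mixture over $x_0$ while $p_{\theta}(z_{t_{1:T}})$ does not depend on $x_0$, Jensen's inequality for the jointly convex map $(\mu,\nu)\mapsto \mbox{KL}(\mu\|\nu)$ gives
\ba{
\mbox{KL}(q(z_{t_{1:T}})\|p_{\theta}(z_{t_{1:T}})) \le \E_{x_0\sim p_{data}}[\mbox{KL}(q(z_{t_{1:T}}\given x_0)\|p_{\theta}(z_{t_{1:T}}))],
}
which is exactly the argument-swapped counterpart of the chain-level bound \eqref{eq:KLUB_chain}.

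Next I would expand the per-$x_0$ divergence on the right using the two factorizations already established in the excerpt: the reverse-order factorization $q(z_{t_{1:T}}\given x_0) = q(z_{t_T}\given x_0)\prod_{j=2}^T q(z_{t_{j-1}}\given z_{t_j},x_0)$ and the reverse Markov chain $p_{\theta}(z_{t_{1:T}}) = p_{prior}(z_{t_T})\prod_{j=2}^T p_{\theta}(z_{t_{j-1}}\given z_{t_j})$ from \eqref{eq:reverse_T}. Taking the log-ratio and the expectation $\E_{q(\cdot\given x_0)}[\,\cdot\,]$ telescopes the expression into a prior term plus a sum of per-step conditional KL divergences:
\ba{
\E_{x_0}[\mbox{KL}(q(z_{t_{1:T}}\given x_0)\|p_{\theta}(z_{t_{1:T}}))] = \E_{x_0}[\mbox{KL}(q(z_{t_T}\given x_0)\|p_{prior}(z_{t_T}))] + \sum_{j=2}^{T}\E_{x_0}\E_{q(z_{t_j}\given x_0)}[L(t_{j-1},z_{t_j},x_0)],
}
where I would invoke the definition $p_{\theta}(z_{t_{j-1}}\given z_{t_j}) = q(z_{t_{j-1}}\given z_{t_j}, f_{\theta}(z_{t_j},t_j))$ from \eqref{eq:reversal} so that each inner term $\mbox{KL}(q(z_{t_{j-1}}\given z_{t_j},x_0)\|p_{\theta}(z_{t_{j-1}}\given z_{t_j}))$ is precisely the term $L(t_{j-1},z_{t_j},x_0)$ of \eqref{eq:L1t}.

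Finally I would compare this decomposition with the negative ELBO written above the lemma. The two expressions agree term-by-term on the prior KL divergence and on the $T-2$ transition terms $L(t_{j-1},z_{t_j},x_0)$; they differ only by the reconstruction term $-\E_{x_0}\E_{q(z_{t_1}\given x_0)}\ln p(x_0\given z_{t_1})$, which is among the ``first two terms'' that are ignored, and by the prior term, which is $\theta$-free since $p_{prior}=q(z_{t_T}\given \E[x_0])$ is fixed. Hence minimizing $-$ELBO over $\theta$ coincides with minimizing the right-hand side of the displayed decomposition, which by the convexity step is a valid upper bound on the swapped divergence $\mbox{KL}(q(z_{t_{1:T}})\|p_{\theta}(z_{t_{1:T}}))$; this establishes the claimed equivalence. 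The main obstacle is the bookkeeping that isolates the genuinely $\theta$-dependent transition terms from the $\theta$-free reconstruction and prior terms, together with verifying that the inner KL produced by the chain-rule expansion matches \eqref{eq:L1t} exactly, including the ordering of its two arguments, which is precisely what distinguishes this AS-KLUB (and hence $-$ELBO) from the KLUB of Lemma~\ref{lem:KLUBchain}.
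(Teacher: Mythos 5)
Your proposal is correct and follows essentially the same route as the paper: bound the swapped trajectory-level KL by $\E_{x_0}[\mbox{KL}(q(z_{t_{1:T}}\given x_0)\|p_{\theta}(z_{t_{1:T}}))]$ via convexity of the KL divergence, telescope it into a prior term plus the per-step conditional KLs of \eqref{eq:L1t}, and match this term-by-term against the $-$ELBO decomposition (whose $\theta$-irrelevant reconstruction and prior terms are discarded). The only cosmetic difference is that the paper re-derives the $-$ELBO from Jensen's inequality on $-\ln$ within the proof, whereas you cite the decomposition already displayed in the main text.
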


The proof in Appendix~\ref{sec:proof} relies on the convex nature of both the KL divergence and the negative logarithmic function.
We find for beta diffusion, optimizing ${\mbox{KL}}(p_{\theta}(z_{t_{1:T}})||q(z_{t_{1:T}}))$ via the proposed KLUBs is clearly preferred to optimizing ${\mbox{KL}}(q(z_{t_{1:T}})||p_{\theta}(z_{t_{1:T}}))$ via $-$ELBOs ($i.e.,$ augment-swapped KLUBs) and leads to stable and satisfactory performance. 

\textbf{KLUBs and (Weighted) Negative ELBOs for Gaussian Diffusion. }
We note
KLUBs 
are directly applicable to Gaussian diffusion, but they may not result in new optimization algorithms for Gaussian diffusion that drastically differ from the weighted ELBO, which weighs the KL terms using the corresponding SNRs. Moreover, whether the default or argument-swapped KLUBs are used typically does not make any difference in Gaussian diffusion and would result in the same squared error-based Bregman divergence. We provide the derivation of the (weighted) ELBOs from the lens of KLUBs in Appendix~\ref{app:gaussian}, providing theoretical support for Gaussian diffusion to use the SNR weighted ELBO \citep{ddpm,kingma2021variational,hang2023efficient}, which was often %
considered as a heuristic but crucial modification of ELBO.

\section{Related Work, Limitations, and Future Directions}

Various diffusion processes, including Gaussian, categorical, Poisson, and beta diffusions, employ specific distributions in both forward and reverse sampling. Gaussian diffusion starts at $\mathcal{N}(0,1)$ in its reverse process, while both Poisson and beta diffusion start at $0$. Beta diffusion's reverse sampling is a monotonically non-decreasing process, similar to Poisson diffusion, but while Poisson diffusion takes count-valued discrete jumps, beta diffusion takes probability-valued continuous jumps. A future direction involves extending beta diffusion to encompass the exponential family \citep{alma991058189962406011,wainwright2008graphical,murphy2012machine,blei2017variational}.

Several recent works have explored alternative diffusion processes closely related to Gaussian diffusion. Cold diffusion by \citet{bansal2022cold} builds models around arbitrary image transformations instead of Gaussian corruption, but it still relies on $L_1$ loss, resembling Gaussian diffusion's squared Euclidean distance. \citet{rissanen2023generative} propose an inverse heat dispersion-based diffusion process that reverses the heat equation using inductive biases in Gaussian diffusion-like models. Soft diffusion by \citet{daras2023soft} uses linear corruption processes like Gaussian blur and masking. Blurring diffusion by \citet{hoogeboom2023blurring} shows that blurring (or heat dissipation) can be equivalently defined using a Gaussian diffusion process with non-isotropic noise and proposes to incorporate blurring into Gaussian diffusion. These alternative diffusion processes share similarities with Gaussian diffusion in loss definition and the use of Gaussian-based reverse diffusion for generation.
By contrast, beta diffusion is distinct from all of them in forward diffusion, training loss, and reverse diffusion.

A concurrent work by \citet{avdeyev2023dirichlet} utilizes the Jacobi diffusion process for discrete data diffusion models. Unlike Gaussian diffusion's SDE definition, the Jacobi diffusion process in \citet{avdeyev2023dirichlet} is defined by the SDE $dx = \frac{s}{2}[a(1-x)-bx]dt+\sqrt{sx(1-x)}dw$, with $x\in[0,1]$ and $s,a,b>0$. The stationary distribution is a univariate beta distribution $\mbox{Beta}(a,b)$. Beta diffusion and the Jacobi process are related to the beta distribution, but they differ in several aspects: Beta diffusion ends its forward process at $\mbox{Beta}(0,\eta)$, a unit point mass at 0, not a $\mbox{Beta}(a,b)$ random variable. The marginal distribution of beta diffusion at time $t$ is expressed as $q(z_t\given x_0)\sim \mbox{Beta}(\eta\alpha_t x_0,\eta(1-\alpha_t x_0))$, while the Jacobi diffusion process involves an infinite sum. Potential connections between beta diffusion and the Jacobi process under specific parameterizations are worth further investigation.

Several recent studies have been actively exploring the adaptation of diffusion models to constrained scenarios \citep{de2022riemannian,liu2023learning,lou2023reflected,fishman2023diffusion}, where data is bounded within specific ranges or constrained to particular manifolds.
These approaches are all rooted in the framework of Gaussian diffusion, which involves the incorporation of additive Gaussian noise.
In sharp contrast, beta diffusion introduces a distinct perspective by incorporating multiplicative noise, leading to the emergence of an inherently hypercubic-constrained diffusion model that offers new development and exploration opportunities.

Classifier-free guidance (CFG), often used in conjunction with heuristic clipping, is a widely used technique to perform conditional generation with Gaussian diffusion \citep{ho2022classifier,lou2023reflected}. 
Beta diffusion offers the potential for a seamless integration of CFG by directly applying it to the logit space, the operating space of its $f_{\theta}$ network, thereby eliminating the necessity for heuristic clipping.

To adapt Gaussian diffusion for high-resolution images, a common approach is to perform diffusion within the latent space of an auto-encoder \citep{rombach2022high,vahdat2021score}. A promising avenue to explore is the incorporation of sigmoid or tanh activation functions in the encoder's final layer. This modification would establish a bounded latent space conducive to applying beta diffusion, ultimately leading to the development of latent beta diffusion tailored for high-resolution image generation.

One limitation of beta diffusion is that its training is computationally expensive and data-intensive, akin to Gaussian diffusion. 
Specifically, with four Nvidia RTX A5000 GPUs, beta diffusion and Gaussian diffusion (VP-EDM) both take approximately 1.46 seconds to process 1000 images of size $32\times 32\times 3$. Processing 200 million CIFAR-10 images, the default number required to reproduce the results of VP-EDM, would thus take %
over 80 hours. Several recent works have explored different techniques to make Gaussian diffusion faster and/or more data efficient in training \citep{wang2023patch,hang2023efficient,zheng2023fast,wu2023fast}. It is worth exploring how to adapt these methods to enhance the training efficiency of beta diffusion.

Beta diffusion has comparable sampling costs to Gaussian diffusion with the same NFE.
 However, various methods have been developed to accelerate the generation of Gaussian diffusion, including combining it with VAEs, GANs, or conditional transport \citep{zheng2021exploiting} %
for faster generation \citep{xiao2021tackling,pandey2022diffusevae,zheng2023truncated,wang2023diffusiongan}, distilling the reverse diffusion chains \citep{luhman2021knowledge,salimans2022progressive,zheng2023fast}, utilizing reinforcement learning \citep{fan2023optimizing}, and transforming the SDE associated with Gaussian diffusion into an ODE, followed by fast ODE solvers \citep{ddim,liu2022pseudo,lu2022dpmsolver,zhang2023fast,karras2022elucidating}. Given these existing acceleration techniques for Gaussian diffusion, it is worth exploring their generalization to enhance the sampling efficiency of beta diffusion.

Beta diffusion %
raises concerns regarding potential negative societal impact when trained on image datasets curated with ill intentions. This issue is not exclusive to beta diffusion but applies to diffusion-based generative models as a whole. It is crucial to address how we can leverage these models for the betterment of society while mitigating any potential negative consequences.

\section{Experiments}
The training and sampling algorithms for beta diffusion are described in detail in Algorithms~\ref{algo:train}~and~\ref{algo:sample}, respectively,
in the Appendix.
Our experiments, conducted on two synthetic data and
the 
CIFAR10 images, primarily aim to showcase beta diffusion's effectiveness in generating range-bounded data. We also underscore the superiority of KLUBs over negative ELBOs as effective optimization objectives for optimizing beta diffusion.
Additionally, we highlight the differences between beta and Gaussian diffusion, specifically in whether the data are generated through additive or multiplicative transforms
and their ability to model the mixture of range-bounded distributions with disjoint supports. 

We compare the performance of ``%
{Gauss ELBO},'' ``%
{Beta ELBO},'' and ``%
{Beta KLUB},'' which respectively correspond to a Gaussian diffusion model optimized with the SNR weighted negative ELBO \citep{ddpm,kingma2021variational}, a beta diffusion model optimized with the proposed KLUB loss defined in \eqref{eq:loss} but with the two arguments inside each KL term swapped, and a beta diffusion model optimized with the proposed KLUB loss defined in \eqref{eq:loss}.
On CIFAR-10, we also evaluate beta diffusion alongside a range of non-Gaussian or Gaussian-like diffusion models for comparison.

\begin{figure}[t]
 \centering
 \includegraphics[width=.8\linewidth]{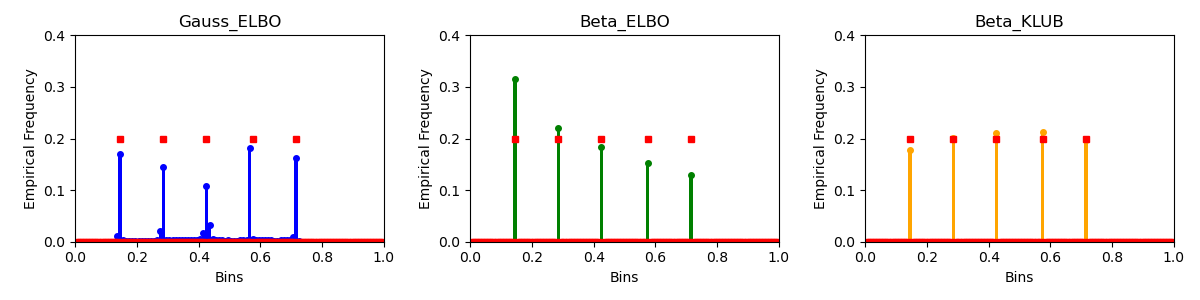}
 \vspace{-3mm}
 \caption{\small Comparison of the true PMF, marked in red square \textcolor{red}{$\rule{.4em}{.4em}$}, and the empirical PMFs of three different methods---Gauss ELBO, Beta ELBO, and Beta KLUB--- calculated over 100 equal-sized bins between 0 and 1. Each empirical PMF is marked in solid dot $\bullet$.}
 \label{fig:pmf}
 \vspace{-4mm}
\end{figure}

\subsection{Synthetic Data}\label{sec:toy1}

We consider a discrete distribution that consists of an equal mixture of five unit point masses located at $x_0 \in \mathcal{D}=\{ 1/7, 2/7, 3/7, 4/7, 5/7\}$. 
We would like to highlight that a unit point mass can also be seen as an extreme case of range-bounded data, where the range is zero.
Despite being simple, this data could be challenging to model by a continuous distribution, as it would require the generator to concentrate its continuous-valued generations on these five discrete points.

We follow previous works to choose the beta linear diffusion schedule as $\alpha_t = e^{-\frac{1}{2} \beta_{d} t^2 - \beta_{\min} t}$, where $\beta_{d}=19.9$ and $\beta_{\min}=0.1$. This schedule, widely used by Gaussian diffusion \citep{ddpm,song2021scorebased,karras2022elucidating}, is applied consistently across all experiments conducted on synthetic data. We set $\eta=10000$, $\pi=0.95$, and $\omega=0.5$. %
As the data already falls within the range of 0~to~1, necessitating neither scaling nor shifting, %
we set $S_{cale}=1$ and $S_{hift}=0$. 
We use the same structured generator $f_{\theta}$ for both Gaussian and beta diffusion. 
We choose 20-dimensional sinusoidal position embeddings \citep{vaswani2017attention}, with the positions set as $1000t$.
The network is an MLP structured as (21-256)-ReLU-(256-256)-ReLU-(256-1). We utilize the Adam optimizer with a learning rate of $5\text{e-}4$ and a mini-batch size~of~1000. %

For data generation, we set $\text{NFE}=200$. We provide the generation results in
 Figure~\ref{fig:pmf}, which shows the true probability mass function (PMF) of the discrete distribution and the empirical PMFs over 100 equal-sized bins between 0 and 1. Each empirical PMF is computed based on 100k random data points generated by the model trained after 400k iterations. It is clear from Figure~\ref{fig:pmf} that ``Gauss ELBO'' %
is the worst in terms of mis-aligning the data supports and placing its data into zero-density regions; ``Beta ELBO'' %
is the worst in terms of systematically overestimating the density at smaller-valued supports; whereas ``Beta KLUB'' reaches the best compromise between accurately identifying the data supports and maintaining correct density ratios between different supports. 

In Appendix~\ref{app:toy}, we further provide quantitative performance comparisons between different diffusion models and conduct an ablation study between KLUB and its two variants for beta diffusion: ``KLUB Conditional'' and
``KLUB Marginal,'' corresponding to $\omega=1$ and $\omega =0$, respectively, in the loss given by \eqref{eq:loss}. Additionally, we evaluate beta diffusion on another synthetic data, which comes from a mixture of range-bounded continuous distributions and point masses supported on disjoint regions.

\subsection{Experiments on Image Generation}

We employ the CIFAR-10 dataset and build upon VP-EDM \citep{karras2022elucidating} as the foundation of our codebase. Our initial foray into applying beta diffusion to generative modeling of natural images closely mirrors the settings of Gaussian diffusion,
including the choice of the generator's network architecture. We introduce a sigmoid diffusion schedule defined as $\alpha_t = 1/(1 + e^{-c_0 - (c_1 - c_0) t})$, which has been observed to offer greater flexibility than the beta linear schedule for image generation. This schedule bears resemblance to the sigmoid-based one introduced for Gaussian diffusion \citep{kingma2021variational,jabri2022scalable}. We configure the parameters for beta diffusion as follows: $c_0=10$, $c_1=-13$, $S_{hift}=0.6$, $S_{cale}=0.39$, $\eta=10000$, $\omega=0.99$, and $\pi=0.95$. We utilize the Adam optimizer with a learning rate of $2\text{e-}4$. %
We use EDM's data augmentation approach, but restrict augmented images to a 0-1 range before scaling and shifting. We use the beta diffusion model trained on 200M images to calculate the FID \citep{heusel2017gans}.
Explanations regarding the intuition behind these parameter selections can be located in Appendix~\ref{sec:parameter}.

\begin{table}[!th]
\caption{\small Comparison of the FID scores of various diffusion models trained on {CIFAR-10}.\label{tab:fid_is}}
\begin{adjustbox}{width=0.68\textwidth,
center}
\begin{tabular}{clc}
 \toprule
 Diffusion Space & Model & FID~($\downarrow$) \\ %
 \midrule
 \multirow{4}{*}{Gaussian} & DDPM~\citep{ddpm} & 3.17 \\ %
 &VDM~\citep{kingma2021variational} & 4.00\\
 &Improved DDPM~\citep{nichol2021improved} & 2.90\\
 &TDPM+~\citep{zheng2023truncated} & 2.83\\
 & VP-EDM~\citep{karras2022elucidating} & \textbf{1.97}\\ %
\cmidrule{1-3}
 \multirow{2}{*}{Gaussian+Blurring} & Soft Diffusion \citep{daras2023soft}& 3.86 \\ %
 & Blurring Diffusion \citep{hoogeboom2023blurring} & 3.17\\ %
 \midrule
 \multirow{3}{*}{Deterministic} &\multirow{2}{*}{Cold Diffusion (image reconstruction) \citep{bansal2022cold}} & 80.08 (deblurring) \\
 & &8.92 (inpainting) \\ %
 & Inverse Heat Dispersion \citep{rissanen2023generative} & 18.96\\
 \midrule
 \multirow{2}{*}{Categorical} & D3PM Gauss+Logistic~\citep{austin2021structured} & 7.34 \\ %
 & $\tau$LDR-10~\citep{campbell2022a} & 3.74 \\ %
 \midrule
 Count & JUMP (Poisson Diffusion)~\citep{chen2023learning} & 4.80 \\
 \midrule
 {Range-bounded}%
 & Beta Diffusion %
 & 3.06 \\
 \bottomrule
\end{tabular}
\end{adjustbox}%
\vspace{-1mm}
\end{table}

\begin{figure}[t]
 \centering
 \begin{minipage}{0.35\textwidth}
 \centering
 \captionsetup{type=table}
 \caption{\small 
 Comparing FID scores for KLUB and negative ELBO-optimized Beta Diffusion on CIFAR-10 with varying NFE under $\eta=10000$ and 
 two different mini-batch sizes $B$.
 }
 \begin{adjustbox}{width=1\columnwidth,
 center}
 \begin{tabular}{c|cccc}
 \toprule
 Loss & $-$ELBO & $-$ELBO & KLUB &KLUB \\ 
 $B$ & 512& 288 & 512 &288\\
 \midrule
 20 &16.04 &16.10 & 17.06 &16.09 \\
 50 &6.82 & 6.82& 6.48 &5.96\\
 200 & 4.55 & 4.84 &3.69 & 3.31 \\
 500 & 4.39 & 4.65 &3.45 &3.10 %
 \\
 1000 & 4.41 &4.61 &3.38 &3.08\\
 2000 & 4.50 & 4.66&3.37 & \textbf{3.06} \\
 \bottomrule
 \end{tabular}
 \label{tab:fid_NFE}
 \end{adjustbox}
 \end{minipage}
 \hspace{0.01\textwidth}
 \begin{minipage}{0.62\textwidth}
 \centering
 \begin{subfigure}{0.44\textwidth}
 \centering
 \includegraphics[width=\linewidth]{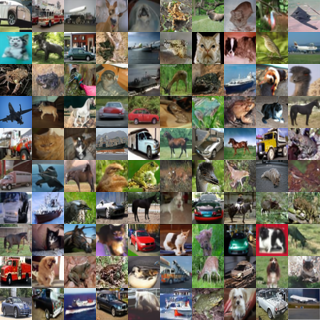}
 \vspace{-5mm}
 \caption{$-$ELBO}
 \end{subfigure}
 \hspace{2mm}
 \begin{subfigure}{0.44\textwidth}
 \centering
 \includegraphics[width=\linewidth]{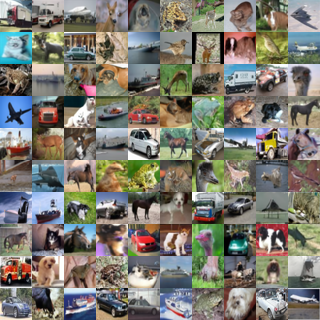}
 \vspace{-5mm}
 \caption{KLUB}
 \end{subfigure}
 \vspace{-2mm}
 \caption{\small Uncurated randomly-generated images by beta diffusion optimized with $-$ELBO or KLUB with $\eta=10000$ and $B=288$. The generation with $\text{NFE}=1000$ starts from the same random seed.}
 \label{fig:subplots}
 \end{minipage}
 \vspace{-4mm}
\end{figure}

Below we provide numerical comparison of beta diffusion with not only Gaussian diffusion models but also alternative non-Gaussian or Gaussian-like ones. We also conduct an ablation study to compare KLUB and negative ELBO across different NFE, $\eta$, and $B$. %
A broad spectrum of diffusion models is encompassed in Table~\ref{tab:fid_is}, which shows that beta diffusion outperforms all non-Gaussian diffusion models on CIFAR10, including Cold Diffusion \citep{bansal2022cold} and Inverse Heat Dispersion \citep{rissanen2023generative}, as well as categorical and count-based diffusion models \citep{austin2021structured,campbell2022a,chen2023learning}. In comparison to Gaussian diffusion and Gaussian+blurring diffusion, beta diffusion surpasses VDM \citep{kingma2021variational}, Soft Diffusion \citep{daras2023soft}, and Blurring Diffusion \citep{hoogeboom2023blurring}. While it may fall slightly short of DDPM \citep{ddpm}, improved DDPM \citep{nichol2021improved}, TPDM+ \citep{zheng2023truncated}, VP-EDM \citep{karras2022elucidating}, it remains a competitive alternative that uses non-Gaussian based diffusion.

Table~\ref{tab:fid_NFE} presents a comparison between KLUB and negative ELBO-optimized beta diffusion across different NFE under two different mini-batch sizes $B$. Table~\ref{tab:fid_NFE_extra} in the Appendix includes the results under several different combinations of $\eta$ and $B$.
We also include Figure~\ref{fig:subplots} to visually compare generated images under KLUB and negative ELBO. The findings presented in Tables \ref{tab:fid_NFE} and \ref{tab:fid_NFE_extra} and Figure~\ref{fig:subplots} provide further validation of KLUB's efficacy in optimizing beta diffusion.

As each training run takes a long time and FID evaluation is also time-consuming, we have not yet optimized the combination of these hyperparameters given the limit of our current computation resources. Thus the results reported in this paper, while
demonstrating that beta diffusion can provide competitive image generation performance, do not yet reflect the full potential of beta diffusion. These results are likely to be further improved given an optimized hyperparameter setting or a network architecture that is tailored to beta diffusion. We leave these further investigations to our future work.

\section{Conclusion}
We introduce beta diffusion characterized by the following properties: 
1) \textbf{Analytic Marginal:}
Given a probability-valued data observation $x_0\in(0,1)$, the distribution at any time point $t\in[0,1]$ of the forward beta diffusion chain, expressed as $q(z_t\given x_0)$, is a beta distribution.
2) \textbf{Analytical Conditional:} Conditioning on a data $x_0$ and a forward-sampled latent variable $z_t\sim q(z_t\given x_0)$, the forward beta diffusion chain can be reversed from time $t$ to the latent variable at any previous time $s\in [0, t)$ by sampling from an analytic conditional posterior $z_s\sim q(z_{s}\given z_t,x_0)$ that follows a scaled and shifted beta distribution. 
3) \textbf{KLUBs:} We introduce the combination of two different Kullback--Leibler Upper Bounds (KLUBs) for optimization and represent them under the log-beta Bregman divergence, showing that their optimal solutions of the generator are both achieved at $f_{\theta^*}(z_t,t) = \E[x_0\given z_t]$. We also establish the connection between augment-swapped KLUBs and (weighted) negative ELBOs for diffusion models. 
Our experimental results confirm the distinctive qualities of beta diffusion when applied to generative modeling of range-bounded data spanning disjoint regions or residing in high-dimensional spaces, as well as the effectiveness of KLUBs for optimizing beta diffusion.

\section*{Acknowledgements}

The authors acknowledge the support of NSF-IIS 2212418, NIH-R37 CA271186,
the Fall 2022 McCombs REG award, the Texas Advanced Computing Center (TACC), and the NSF AI Institute
for Foundations of Machine Learning (IFML).

\bibliographystyle{plainnat}
\bibliography{reference.bib,ref.bib}
\clearpage
\newpage
\onecolumn
\appendix
\begin{center}
 \Large{\textbf{Beta Diffusion: Appendix}}
\end{center}

\begin{algorithm}
\caption{Training of Beta Diffusion\label{algo:train}}
\begin{algorithmic}[1]
 \Require Dataset $\cD$ whose values are bounded from 0 to 1, Mini-batch size $B$, concentration parameter $\eta=10000$, data shifting parameter $S_{hift}=0.6$, data scaling parameter $S_{cale}=0.39$, generator $f_\theta$, loss balance coefficient $\omega=0.99$, time reversal coefficient $\pi=0.95$, %
 and a decreasing function that returns scheduling parameter 
 $\alpha_{t}\in(0,1)$ given $t\in[0,1]$, such as a beta linear schedule defined by $\alpha_t = e^{-0.5 (\beta_{\max}-\beta_{\min}) t^2 - \beta_{\min} t}$, where $\beta_{\max}=20$ and $\beta_{\min}=0.1$, and a sigmoid schedule defined by $\alpha_t = 1/(1 + e^{-c_0 - (c_1 - c_0) t})$, %
 where $c_0 = 10$ and $c_1 = -13$.
 \Repeat
 \State Draw a mini-batch $X_0=\{x_0^{(i)}\}_{i=1}^B$ from $\cD$ %
 \For{$i=1\ \text{to}\ B$} \Comment{can be run in parallel}
 \State $t_i\sim\mbox{Unif}(1\text{e-}5,1)$
 \State %
 $s_i = \pi t_i$ 
 \State Compute $\alpha_{s_i}$ and $\alpha_{t_i}$
 
 \State $x_0^{(i)} = x_0^{(i)}*S_{cale}+S_{hift}$

 \State 
 $z_{t_i}\sim
 \mbox{Beta}(\eta\alpha_{t_i} x_0^{(i)},\eta(1-\alpha_{t_i} x_0^{(i)}))$

 \State $\hat x_0^{(i)} = f_{\theta}(z_{t_{i}},{t_{i}})*S_{cale}+S_{hift}$
 \State Using \eqref{eq:loss} to compute the loss as 
 \ba{
 &\mathcal L_i = \omega \mbox{KLUB}(s_i,z_{t_i},x_0^{(i)}) + (1-\omega) \mbox{KLUB}(z_{t_i},x_0^{(i)})\notag\\
 &=\omega D_{\ln B(a,b)}\big\{\left[\eta( \alpha_{s_i}- {\alpha_{t_i}})x_0^{(i)},\eta(1-\alpha_{s_i}x_0^{(i)})\right],~\left[\eta( \alpha_{s_i}- {\alpha_{t_{i}}})\hat x_0^{(i)},\eta(1-\alpha_{s_i}\hat x_0^{(i)})\right]\big\}\notag\\
 &+(1-\omega) D_{\ln B(a,b)}\big\{\left[\eta\alpha_{t_{i}} x_0^{(i)},\eta(1-\alpha_{t_{i}}x_0^{(i)})\right],~\left[\eta \alpha_{t_{i}} \hat x_0^{(i)},\eta(1-\alpha_{t_i}\hat x_0^{(i)})\right]\big\}\label{eq:LossLoss}
 }
 or swap the two arguments of both log-beta Bregman divergences in \eqref{eq:LossLoss} if the loss is (weighted) $-$ELBO.
 \EndFor
 \State Perform SGD with $\frac{1}{B}\nabla_{\theta}\sum_{i=1}^B \mathcal L_i
 $
 \Until{converge}
\end{algorithmic}
\end{algorithm}

\begin{table}[h]
\caption{Comparing FID scores for KLUB and negative ELBO-optimized Beta Diffusion on the {CIFAR-10} image dataset with varying NFE under several different combinations of concentration parameter $\eta$ and mini-batch size $B$. We train the model with 200M images and use it to calculate FID. We compute FID one time in each experiment. The other model parameters are set as $S_{cale}=0.39$, $S_{hift}=0.60$, $\pi=0.95$, $\omega=0.99$, $lr=2\text{e-}4$, $c_0=10$, and $c_1=-13$.\label{tab:fid_NFE_extra}
}
\begin{center}
\begin{adjustbox}{
center} %
\begin{tabular}{ccc|cccccccc}
 \toprule
 Loss&$\eta\times 10^{-4}$&$B$& NFE = 10 & 20 & 50 & 200 & 500 & 1000 & 2000\\
 \midrule
 $-$ELBO &1 &512& 37.64 & 16.04 & 6.82 & 4.55 & 4.39 & 4.41 & 4.50 
 \\
 $-$ELBO &1 &288& 37.54 &16.10 & 6.82 & 4.84 &4.65 & 4.61 & 4.66
 \\
 KLUB &1 &512& 39.05 & 17.06 & 6.48 & 3.69 & 3.45 & 3.38 & 3.37
 \\
 KLUB &0.1&512& 41.28 & 20.38 & 9.72 & 5.85 & 4.98 & 4.90 & 4.88
 \\
 KLUB &0.3&512& 36.70 & 16.47 & 7.03 & 4.10 & 3.65 & 3.67 & 3.66
 \\
 KLUB &1&288& 37.67 & 16.09 & 5.96 & 3.31 &3.10 & 3.08 & \textbf{3.06}
 \\
 KLUB &0.8&288& 36.46 & 15.58 &5.98 & 3.49 & 3.22 & 3.21 & 3.23
 \\
 KLUB &1.2&288& 38.49 & 16.59 & 6.31 & 3.68 & 3.36 & 3.37 & 3.24
 \\
 KLUB &1&128& 38.16 & 16.47 & 6.29 & 3.74 & 3.44 & 3.40 & 3.47
 \\
 KLUB &2&128& 39.80 & 17.09 & 6.58 & 3.93 & 3.76 & 3.75 & 3.65
 \\
 \bottomrule
\end{tabular}
\end{adjustbox}%
\end{center}
\end{table}

\begin{algorithm}
\caption{Sampling of Beta Diffusion\label{algo:sample}}
\begin{algorithmic}[1]
 \Require Number of function evaluations (NFE) $J=200$, %
 generator~$f_\theta$, and timesteps $\{t_{j}\}_{j=0}^J$: $t_j = 1-(1-\text{1e-5})*(J-j) / (J - 1)$ 
 ~\text{ for } ~$j=1,\ldots,J$ and $t_0=0$
\If{$\text{NFE}>350$}
 
 \State $\alpha_{t_j}=1/(1 + e^{-c_0 - (c_1 - c_0) {t_j}})$
\Else
 \State 
 $\alpha_{t_j}=(1/(1 + e^{-c_1}))^{t_j} $
 \EndIf
 
 \State Initialize $\hat x_0= \E[x_0]*{S_{cale}}+{S_{hift}}$
 \State
 $z_{t_{J}}\sim \mbox{Beta}(\eta\alpha_{t_{J}} \hat x_0,\eta(1-\alpha_{t_{J}} \hat x_0))$
 
 \For{$j=J\ \text{to}\ 1$}
 \State $\hat x_0 = f_{\theta}(z_{t_{j}},\alpha_{t_{j}})*S_{cale} +S_{hift}$
 \State 
$p_{(t_{j-1}\leftarrow t_j)}\sim \mbox{Beta}\left(\eta( \alpha_{t_{j-1}}- {\alpha_{t_{j}}})\hat x_0,\eta(1-\alpha_{t_{j-1}}\hat x_0)\right),$ \vspace{2mm}

\Statex \quad \quad \quad $\triangleright$ which is implemented in the logit space as 
 \ba{
 &\text{logit}(p_{(t_{j-1}\leftarrow t_j)}) = \ln u - \ln v,
 \notag\\
 &u \sim \mbox{Gamma}(\eta( \alpha_{t_{j-1}}- {\alpha_{t_{j}}})\hat x_0,1),\notag\\ &v\sim \mbox{Gamma}(\eta(1-\alpha_{t_{j-1}}\hat x_0),1)
 \notag
 }
\State 
$z_{t_{j-1}} = z_{t_{j}}+(1-z_{t_{j}})p_{(t_{j-1}\leftarrow t_j)}$,\vspace{2mm}
\Statex \quad \quad \quad $\triangleright$ which is implemented in the logit space as \ba{
\mbox{logit}(z_{t_{j-1}}) %
=\ln \left( e^{ \text{logit}(z_{t_{j}})} +e^{ \text{logit}(p_{(t_{j-1}\leftarrow t_j)})} + e^{ \text{logit}(z_{t_{j}})+ \text{logit}(p_{(t_{j-1}\leftarrow t_j)})}\right)\notag
}
 \EndFor
 \State \textbf{return} ${(\hat x_0-S_{hift} )}/{S_{cale}}$ or $(z_{t_0}/\alpha_{t_0}-S_{hift} )/{S_{cale}}$
\end{algorithmic}
\end{algorithm}

\newpage

\section{Log-beta Divergence and KL Divergence between Beta Distributions}\label{app:logbeta}
By the definition of Bregman divergence, the log-beta divergence corresponding to the log-beta function $\ln B(a,b) =\ln\Gamma(a)+\ln\Gamma(b) -\ln\Gamma(a+b)$, which is differentiable, and strictly convex on $(0,\infty)^2$ as a function of $a$ and $b$, can be expressed as
\ba{
&D_{\ln B(a,b)}((\alpha_q,\beta_q),(\alpha_p,\beta_p))\notag\\
&=
\ln\frac{ B(\alpha_q,\beta_q) }{B(\alpha_p,\beta_p)} 
-( \alpha_q-\alpha_p,\beta_q-\beta_p)
 \begin{pmatrix}
 \nabla_{\alpha_p}\ln B(\alpha_p,\beta_p) \\
 \nabla_{\beta_p}\ln B(\alpha_p,\beta_p)
 \end{pmatrix}\notag\\
& =
\ln\frac{ B(\alpha_q,\beta_q) }{B(\alpha_p,\beta_p)} 
-( \alpha_q-\alpha_p,\beta_q-\beta_p)
 \begin{pmatrix}
 \psi(\alpha_p)-\psi(\alpha_p+\beta_p) \\
 \psi(\beta_p)-\psi(\alpha_p+\beta_p)
 \end{pmatrix}\notag\\
& =\ln\frac{ B(\alpha_q,\beta_q) }{B(\alpha_p,\beta_p)} -
(\alpha_q-\alpha_p)\psi(\alpha_p)-(\beta_q-\beta_p)\psi(\beta_p) +(\alpha_q-\alpha_p+\beta_q-\beta_p) \psi(\alpha_p+\beta_p) ,\notag
}
which is equivalent to the analytic expression of
$$\mbox{KL}(\mbox{Beta}(\alpha_p,\beta_p)||\mbox{Beta}(\alpha_q,\beta_q)).
$$

\section{Proof}\label{sec:proof}
\begin{proof}[Proof of Lemma~\ref{lem:Beta}]
The joint distribution of $z_{t}$ and $z_{s} $ in \eqref{eq:beta_3} can be expressed as
 \ba{
&q(z_t,z_{s} \given x_0)=q(z_t\given z_{s} )q(z_{s} \given x_0) \notag\\
&=\frac{1}{z_{s} } \mbox{Beta}\left(\frac{z_t}{z_{s} };\eta\alpha_{t}x_0 , \eta(\alpha_{s} -\alpha_{t})x_0\right) \mbox{Beta}(z_{s} ;\eta\alpha_{s} x_0,\eta(1-\alpha_{s} x_0))\notag\\
&=\frac{1}{z_{s} }\frac{\Gamma(\eta \alpha_{s} x_0)}{\Gamma(\eta \alpha_{t}x_0) \Gamma(\eta (\alpha_{s} -\alpha_{t})x_0)} \left(\frac{z_t}{z_{s} }\right)^{\eta \alpha_{t}x_0 - 1}\left(1-\frac{z_t}{z_{s} }\right)^{\eta(\alpha_{s} - \alpha_{t})x_0 - 1} \notag\\
&~~~~\times\frac{\Gamma(\eta)}{\Gamma(\eta \alpha_{s} x_0) \Gamma(\eta (1-\alpha_{s} x_0))} z_{s} ^{\eta \alpha_{s} x_0 - 1}(1-z_{s} )^{\eta(1-\alpha_{s} x_0) - 1} \notag\\
& = \frac{\Gamma(\eta)}{\Gamma(\eta \alpha_{t}x_0) \Gamma(\eta (\alpha_{s} -\alpha_{t})x_0) \Gamma(\eta (1-\alpha_{s} x_0)) } \notag\\
&~~~~\times z_t^{\eta \alpha_{t}x_0 - 1}(z_{s} -z_t)^{\eta(\alpha_{s} - \alpha_{t})x_0 - 1} (1-z_{s} )^{\eta(1-\alpha_{s} x_0) - 1}.
 \label{eq:Beta2}}
The joint distribution of $z_{t}$ and $z_{s} $ in %
\eqref{eq:beta_2} can be expressed as 
\ba{
&q(z_{t},z_{s} \given x_0)=q(z_{s} \given z_t,x_0)q(z_t\given x_0) \notag\\
&=\frac{1}{1-z_t} \mbox{Beta}\left(\frac{z_{s} -z_t}{1-z_t};\eta(\alpha_{s} -\alpha_{t})x_0 , \eta(1-\alpha_{s} x_0)\right) \mbox{Beta}(z_t;\eta\alpha_{t}x_0,\eta(1-\alpha_{t}x_0))\notag\\
&=\frac{1}{1-z_t}\frac{\Gamma(\eta(1- \alpha_{t}x_0))}{\Gamma(\eta (1-\alpha_{s} x_0)) \Gamma(\eta (\alpha_{s} -\alpha_{t})x_0)} \notag\\
&~~~~\times\left(\frac{z_{s} -z_t}{1-z_t}\right)^{\eta (\alpha_{s} -\alpha_{t})x_0 - 1}\left(1-\frac{z_{s} -z_t}{1-z_t}\right)^{\eta(1-\alpha_{s} x_0) - 1} \notag\\
&~~~~\times
\frac{\Gamma(\eta)}{\Gamma(\eta \alpha_{t}x_0) \Gamma(\eta (1-\alpha_{t})x_0)} z_t^{\eta \alpha_{t}x_0 - 1}(1-z_t)^{\eta(1-\alpha_{t}x_0) - 1} \notag\\
&=\frac{\Gamma(\eta)}{\Gamma(\eta \alpha_{t}x_0) \Gamma(\eta (\alpha_{s} -\alpha_{t})x_0) \Gamma(\eta (1-\alpha_{s} x_0)) }\notag\\
&~~~~\times ({z_{s} -z_t})^{\eta (\alpha_{s} -\alpha_{t})x_0 - 1}(1-z_{s} )^{\eta(1-\alpha_{s} x_0) - 1} z_t^{\eta \alpha_{t}x_0 - 1}.
 \label{eq:Beta1}
 }
The joint distribution shown in \eqref{eq:Beta2} is the same as that in \eqref{eq:Beta1}.
\end{proof}

\begin{proof}[Proof of Lemma~\ref{lem:KLUB}]
Since we can re-express $q(z_t) $ and $q(z_s,z_t)$ as 
$$q(z_t) = \E_{x_0\sim p_{data}(x_0) }[q(z_t\given x_0)],$$ $$q(z_s,z_t) = \E_{x_0\sim p_{data}(x_0) }[q(z_s,z_t\given x_0)] %
,$$ and $q(z_s,z_t\given x_0)=q(z_s\given z_t,x_0)q(z_t\given x_0)$, using the convexity of the KL divergence, we have
\ba{
&{\mbox{KL}}(p_{\theta}(z_s\given z_t)q(z_t)||q(z_s,z_t))\notag\\ 
& ={\mbox{KL}}( \E_{x_0\sim p_{data}(x_0) }[p_{\theta}(z_s\given z_t)q(z_t\given x_0)]||\E_{x_0\sim p_{data}(x_0) }[q(z_s,z_t\given x_0)])
\notag\\
& \le \E_{x_0\sim p_{data}(x_0) }[{\mbox{KL}}( p_{\theta}(z_s\given z_t)q(z_t\given x_0)||q(z_s,z_t\given x_0))]\notag\\
& = \E_{x_0\sim p_{data}(x_0) } \E_{(z_s,z_t)\sim p_{\theta}(z_s\given z_t)q(z_t\given x_0)}\ln\frac{p_{\theta}(z_s\given z_t)q(z_t\given x_0)}{q(z_s\given z_t,x_0)q(z_t\given x_0)} 
\notag\\
& = \E_{x_0\sim p_{data}(x_0) } \E_{z_t\sim q(z_t\given x_0)}\E_{z_s\sim p_{\theta}(z_s\given z_t)}\ln\frac{p_{\theta}(z_s\given z_t)}{q(z_s\given z_t,x_0)} 
\notag\\
&%
= \E_{(z_t,x_0)\sim q(z_t\given x_0)p_{data}(x_0)}[{\mbox{KL}}(p_{\theta}(z_s\given z_t)||q(z_s\given z_t, x_0))]\notag\\
&=\E_{(z_t,x_0)\sim q(z_t\given x_0)p_{data}(x_0)}[{\mbox{KL}}(q(z_s\given z_t,\hat x_0 =f_{\theta}(z_t,t)||q(z_s\given z_t, x_0))].\notag
}

\end{proof}

\begin{proof}[Proof of Lemma~\ref{lem:marginal}]
Since we can re-express $p_{\theta}(z'_t)$ and $q(z_t')$ as $$
p_{\theta}(z'_t) := \E_{z_t\sim q(z_t)} [q(z'_t \given \hat{x}_0 = f_{\theta}(z_t,t))]=\E_{(z_t,x_0)\sim q(z_t\given x_0)p_{data}(x_0)}[q(z'_t \given \hat{x}_0 = f_{\theta}(z_t,t))],
$$ $$q(z_t') = \E_{(z_t,x_0)\sim q(z_t\given x_0)p_{data}(x_0)}[q(z_t'\given x_0)],$$ using the convexity of the KL divergence, we have
 \ba{
&{\mbox{KL}}(p_{\theta}(z'_t)||q(z'_t))\notag\\
& = {\mbox{KL}}(\E_{(z_t,x_0)\sim q(z_t\given x_0)p_{data}(x_0)}[q(z'_t \given \hat{x}_0 = f_{\theta}(z_t,t))]||\E_{(z_t,x_0)\sim q(z_t\given x_0)p_{data}(x_0)}[q(z_t'\given x_0)])\notag\\
&\le \E_{(z_t,x_0)\sim q(z_t\given x_0)p_{data}(x_0)}[{\mbox{KL}}( q(z'_t \given f_{\theta}(z_t,t))||q(z'_t\given x_0))].\notag
} 
\end{proof}

\begin{proof}[Proof of Lemma~\ref{lem:KLUBchain}]
The proof utilizes the convexity of the KL divergence to show that \ba{
&\mbox{KL}(p_{\theta}(z_{t_{1:T}})||q(z_{t_{1:T}})) 
= \mbox{KL}(\E_{x_0\sim p_{data}(x_0)}[p_{\theta}(z_{1:T}\given x_0)]||\E_{x_0\sim p_{data}(x_0)}[q(z_{1:T}\given z_0)])\notag\\
&\le \E_{x_0\sim p_{data}(x_0)}\mbox{KL}(p_{\theta}(z_{t_{1:T}}\given x_0)||q(z_{t_{1:T}}\given x_0))\notag\\
&=\E_{x_0\sim p_{data}(x_0)}\E_{z_{t_{1:T}}\sim p_{\theta}(z_{t_{1:T}}\given x_0)}\ln \frac{p_{\theta}(z_{t_{1:T}}\given x_0)}{q(z_{t_{1:T}}\given x_0)}\notag\\
& =\E_{x_0\sim p_{data}(x_0)} \bigg[\mbox{KL}(p_{prior}(z_{t_T}\given x_0)||q(z_{t_T}\given x_0))\notag\\ 
&~~~~~~~~~~~~~~~~~~~~~~~~~~~~~~~+ \sum_{j=2}^T \E_{z_{t_j}\sim p_{\theta}(z_{t_j}\given x_0)} \E_{z_{t_{j-1}}\sim p_{\theta}(z_{t_{j-1}}\given z_{t_j})}\ln \frac{p_{\theta}(z_{t_{j-1}}\given z_{t_j})}{q(z_{t_{j-1}}\given z_{t_j}, x_0)}\bigg].\notag
}
We note $p_{\theta}(z_{t_j}\given x_0)\propto p(x_0\given z_{t_j})p_{\theta}(z_{t_j})$, which in general is intractable to sample from, motivating us to replace $p_{\theta}(z_{t_j}\given x_0)$ with $q(z_{t_j}\given x_0)$ for tractable computation. 
\end{proof}

\begin{proof}[Proof of Lemma~\ref{lem:ASKLUB}]
Utilizing the convexity of the KL divergence, we present an upper-bound of the augmented-swapped KL divergence $\mbox{KL}(q(z_{t_{1:T}})||p_{\theta}(z_{t_{1:T}})) $, referred to as AS-KLUB, as 
\ba{
&\mbox{KL}(q(z_{t_{1:T}})||p_{\theta}(z_{t_{1:T}})) =\mbox{KL}(\E_{x_0\sim p_{data}(x_0)}[q(z_{t_{1:T}}\given x_0)]||\E_{x_0\sim p_{data}(x_0)}[p_{\theta}(z_{t_{1:T}}\given x_0)])\notag\\
&\le \mbox{AS-KLUB}= \E_{x_0\sim p_{data}(x_0)}\mbox{KL}(q(z_{t_{1:T}}\given x_0)||p_{\theta}(z_{t_{1:T}}\given x_0))\notag\\
&=\E_{x_0\sim p_{data}(x_0)}\E_{z_{t_{1:T}}\sim q(z_{t_{1:T}}\given x_0)}\ln \frac{q(z_{t_{1:T}}\given x_0)}{p_{\theta}(z_{t_{1:T}}\given x_0)}\notag\\
& =\E_{x_0\sim p_{data}(x_0)} \bigg[\mbox{KL}(q(z_{t_T}\given x_0)||p_{prior}(z_{t_T}\given x_0))\notag\\ 
&~~~~~~~~~~~~~~~~~~~~~~~~~~~~~~~+ \sum_{j=2}^T \E_{z_{t_j}\sim q(z_{t_j}\given x_0)} \E_{z_{t_{j-1}}\sim q(z_{t_{j-1}}\given z_{t_j},x_0)}\ln \frac{q(z_{t_{j-1}}\given z_{t_j}, x_0)}{p_{\theta}(z_{t_{j-1}}\given z_{t_j})}\bigg].\notag
}
Utilizing the convexity of the negative logarithmic function, we have
\ba{
&-\E_{x_0\sim p_{data}(x_0)}\ln p_{\theta}(x_0) = -\E_{x_0\sim p_{data}(x_0)}\ln \E_{p_{\theta}(z_{t_{1:T}})}[ p(x_0\given z_{t_{1:T}} )]\notag\\
&=-\E_{x_0\sim p_{data}(x_0)}\ln \E_{q(z_{t_{1:T}}\given x_0)}\left[ \frac{p(x_0\given z_{t_{1:T}} )p_{\theta}(z_{t_{1:T}})}{q(z_{t_{1:T}}\given x_0)}\right] \notag\\
&\le \textstyle -\mbox{ELBO} = 
-\E_{x_0\sim p_{data}(x_0)}\E_{q(z_{t_{1:T}}\given x_0)} \left[\ln \frac{p(x_0\given z_{t_{1:T}})p_{\theta}(z_{t_{1:T}})}{q(z_{t_{1:T}}\given x_0)}\right]\notag\\
& =\E_{x_0\sim p_{data}(x_0)} \bigg[-\E_{q(z_{t_1}\given x_0)}\ln p(x_0\given z_{t_1})+\mbox{KL}[q(z_{t_T}\given x_0)||p_{prior}(z_{t_T})]\notag\\ 
&~~~~~~~~~~~~~~~~~~~~~~~~~~~~~~~+ \sum_{j=2}^T \E_{z_{t_j}\sim q(z_{t_j}\given x_0)} \E_{z_{t_{j-1}}\sim q(z_{t_{j-1}}\given z_{t_j},x_0)}\ln \frac{q(z_{t_{j-1}}\given z_{t_j}, x_0)}{p_{\theta}(z_{t_{j-1}}\given z_{t_j})}\bigg]\notag.%
}
Thus, when we disregard its first term, AS-KLUB is equivalent to $-$ELBO without its first two terms. Since these three terms typically do not affect optimization, optimizing the generator $f_{\theta}$ with AS-KLUB is equivalent to using $-$ELBO.
\end{proof}

\section{Derivation of (Weighted) ELBOs of Gaussian Diffusion from KLUB}\label{app:gaussian}

Let us denote $\alpha_0=1$ and $z_0=x_0$. 
Following the definition in \citet{ddpm} and \citet{ddim}, we define a Gaussian diffusion-based generative model as 
\ba{
z_{1}&\sim \mathcal{N}(\sqrt{\alpha_{1}}x_0,1- \alpha_{1}),\notag\\
z_{2}&\sim \mathcal{N}\left(\sqrt{\frac{\alpha_{2}}{\alpha_1}}z_1,1- \frac{\alpha_{2}}{\alpha_1}\right),\notag\\
\ldots\notag\\
z_{t}&\sim \mathcal{N}\left(\sqrt{\frac{\alpha_{t}}{\alpha_{t-1}}}z_{t-1},1- \frac{\alpha_{t}}{\alpha_{t-1}}\right),\notag\\
\ldots\notag\\
z_{T}&\sim \mathcal{N}\left(\sqrt{\frac{\alpha_{T}}{\alpha_{T-1}}}z_{T-1},1- \frac{\alpha_{T}}{\alpha_{T-1}}\right),\notag
}
where the diffusion scheduling parameters $\alpha_t$ in this paper is related to $\beta_t$ in \citet{ddpm} as 
$$\beta_t := 1-\frac{\alpha_t}{\alpha_{t-1}}.$$

The same as the derivations in \citet{ddpm}, 
we can express the forward marginal distribution at time $t$ as
\ba{
z_{t}\sim \mathcal{N}(\sqrt{\alpha_{t}}x_0,1- \alpha_{t}).\notag
}
Assuming $\E[x_0] =0 $ and $\mbox{var}[x_0]=1$, the signal-to-noise ratio at time $t$ is defined as
$$
\mbox{SNR}_t=\E_{x_0}\left[\left(\frac{\E[z_t\given x_0]} {\text{std}[z_t\given x_0]}\right)^2\right]= {\frac{\alpha_t }{1-\alpha_t }}\E[x_0^2]={\frac{\alpha_t }{1-\alpha_t }}.
$$
Since $ \sqrt{\frac{\alpha_{t-1}}{\alpha_{t}}} x_{t}\sim \mathcal{N}(z_{t-1},\frac{\alpha_{t-1}}{\alpha_{t}}-1)$ and $z_{t-1}\sim \mathcal{N}(\sqrt{\alpha_{t-1}}z_0,1- \alpha_{t-1})$, using the conjugacy of the Gaussian distributions with respect to their mean, we can express the conditional posterior as 
\ba{
&q(z_{t-1}\given x_0,z_t) \notag\\
&= \mathcal{N}\left(
\left(\frac{1}{1-\alpha_{t-1}} + \frac{\alpha_{t}}{\alpha_{t-1}-\alpha_{t}}\right)^{-1}
\left(\frac{\sqrt{\alpha_{t-1}}x_0}{1-\alpha_{t-1}} + \frac{ \sqrt{ {\alpha_{t-1}}{\alpha_{t}}} x_{t}}{\alpha_{t-1}-\alpha_{t}}\right), \left(\frac{1}{1-\alpha_{t-1}} + \frac{\alpha_{t}}{\alpha_{t-1}-\alpha_{t}}\right)^{-1}
\right)\notag\\
 &=\mathcal{N}
\left(\frac{\sqrt{\alpha_{t-1}}}{1-\alpha_{t}}(1-\frac{\alpha_t}{\alpha_{t-1}})x_0 + 
\frac{1-\alpha_{t-1}}{1-\alpha_t}\sqrt{\frac{\alpha_t}{\alpha_{t-1}} } z_t,~~~ \frac{1-\alpha_{t-1}}{1-\alpha_t}(1-\frac{\alpha_t}{\alpha_{t-1}})
\right).\notag
}

The forward Gaussian diffusion chain
$$
q(z_{1:T}\given x_0) = \prod_{t=1}^T p(z_t\given z_{t-1)})
$$
can be equivalently sampled in reverse order as 
$$
q(z_{1:T}\given x_0) = q(z_T\given x_0)\prod_{t=2}^T q(z_{t-1}\given z_t,x_0).
$$

As this reverse chain is non-causal and non-Markovian, we need to approximate it with a Markov chain during inference, which is expressed as
$$
p(z_{1:T}) = p_{prior}(z_T)\prod_{t=2}^{T}p_{\theta}(z_{t-1}\given z_t) .
$$
The usual strategy \citep{ddim,kingma2021variational,xiao2021tackling} is to utilize the conditional posterior to define
$$
p_{\theta}(z_{t-1}\given z_t) = q(z_{t-1}\given z_t,\hat x_0 = f_{\theta}(z_t,t)).
$$

In what follows, we redefine the time $t$ as a continuous variable between 0 and 1. We let $t_{1:T}$ be a set of $T$ discrete time points within that interval, with $0\le t_1<t_2<\ldots<t_T\le 1$. The corrupted data observations over these $T$ time points are defined as $z_{t_{1:T}}$.

\subsection{Optimization of Gaussian Diffusion via Negative ELBO}
Viewing $p_{\theta}(x_0\given z_{1:T})$ as the decoder, $p_{\theta}(z_{1:T})$ as the prior, and $q(z_{1:T}\given x_0)$ as the inference network, we can optimize the generator parameter via the negative ELBO as
\ba{
&-\E_{x_0}\ln p_{\theta}(x_0)\le \textstyle -\mbox{ELBO} = 
\E_{x_0}\E_{q(z_{t_{1:T}}\given x_0)} \left[-\ln \frac{p(x_0\given z_{t_{1:T}})p(z_{t_{1:T}})}{q(z_{t_{1:T}}\given x_0)}\right]\notag\\
&=\resizebox{.95\textwidth}{!}{$\E_{x_0}\E_{q(z_{t_1}\given x_0)}\ln p(x_0\given z_{t_1})+ \E_{x_0}\mbox{KL}[q(z_{t_T}\given x_0)||p_{prior}(z_{t_T})]
\textstyle+
\sum_{j=2}^T \E_{x_0}\E_{q(z_{t_j}\given x_0)}[L(t_{j-1},z_{t_j},x_0)],$}\notag
}
where the first two terms are often ignored and the focus is placed on the remaining $T-2$ terms, defined as
\ba{
&L(s,z_{t},x_0) = \mbox{KL}(q(z_{s}\given z_{t},x_0)||q(z_s\given z_t,\hat{x}_0=f_{\theta}(z_t,t)))\notag\\
& = \frac{ 1 }{2\frac{1-\alpha_s}{1-\alpha_t}(1-\frac{\alpha_t}{\alpha_s})} \left(\frac{\sqrt{\alpha_s}}{1-\alpha_{t}}\left(1-\frac{\alpha_t}{\alpha_s}\right)\right)^2\|x_0-f_{\theta}(z_t,t)\|_2^2\notag\\
&=\frac{1}{2}\left(\frac{\alpha_s}{1-\alpha_s}-\frac{\alpha_t}{1-\alpha_t} \right)\|x_0-f_{\theta}(z_t,t)\|_2^2, %
\label{eq:L1t_Gauss}
} 
where $0<s<t<1$. We choose $s = \max(t-1/T,0)$ during training.

Since $z_t = \sqrt{\alpha_t} x_0+\sqrt{1-\alpha_t} \epsilon_t,~\epsilon_t\sim \mathcal{N}(0,1)$, it is true that
$$
x_0 =\frac{z_t- \sqrt{1-\alpha_t} \epsilon_t}{\sqrt{\alpha_t}}.
$$ 
Instead of directly predicting $x_0$ given $x_t$, we can equivalently predict $\hat{\epsilon}_t = \epsilon_{\theta}(z_t,t)$ given $z_t$ and let 
$$
\hat x_0 = f_{\theta}(z_t,t)= \frac{z_t- \sqrt{1-\alpha_t} \epsilon_{\theta}(z_t,t)}{\sqrt{\alpha_t}}.
$$
Thus \eqref{eq:L1t_Gauss} can also be written as
\ba{
L(s,z_{t},x_0) 
&=\frac{1}{2}\left(\frac{\alpha_s}{1-\alpha_s}-\frac{\alpha_t}{1-\alpha_t} \right)\|x_0-f_{\theta}(z_t,t)\|_2^2\notag\\
&=\frac{1}{2}(\mbox{SNR}_s -\mbox{SNR}_t)
\|x_0-f_{\theta}(z_t,t)\|_2^2\notag\\
&=\frac{1}{2}\left(\frac{\alpha_s}{1-\alpha_s}-\frac{\alpha_t}{1-\alpha_t} \right)\frac{1-\alpha_t}{\alpha_t}\|\epsilon_t-\epsilon_{\theta}(z_t,t)\|_2^2\notag\\
&=\frac{1}{2}\left(\frac{\mbox{SNR}_s}{\mbox{SNR}_t}-1\right)\|\epsilon_t-\epsilon_{\theta}(z_t,t)\|_2^2,
} 
which agrees with Equations 13 and 14 in \citet{kingma2021variational}.

\subsection{Optimization of Gaussian Diffusion via KLUB Conditional}
Following the definition of KLUB (Conditional), for Gaussian diffusion we have
$$
\mbox{KLUB}(s,z_t,x_0) %
=\mbox{KL}(q(z_s\given \hat{x}_0 = f_{\theta}(z_t,t),z_t)||q(z_s\given x_0,z_t))
$$
For two Gaussian distributions $q_1$ and $q_2$ that have the same variance, we have $\mbox{KL}(q_1||q_2)=\mbox{KL}(q_2||q_1)$ and hence 
\ba{
\mbox{KLUB}(s,z_t,x_0) &=\mbox{KL}(q(z_s\given \hat{x}_0 = f_{\theta}(z_t,t),z_t)||q(z_s\given x_0,z_t))\notag\\
&=\mbox{KL}(q(z_s\given x_0,z_t)||q(z_s\given \hat{x}_0 = f_{\theta}(z_t,t),z_t)) \notag\\
& = L(s,z_{t},x_0).\notag
}
Therefore, over the same set of time points $t_{1:T}$, optimizing via KLUB conditional is identical to optimizing via $-$ELBO. As analyzed before, optimizing via KLUB conditional ($i.e.$, $-$ELBO) may not be able to directly counteract the error accumulated over the course of diffusion, and hence could lead to slow convergence. 

\subsection{Optimization of Gaussian Diffusion via KLUB Marginal and SNR-weighted Negative ELBO}
Following the definition of KLUB (Marginal), for Gaussian diffusion we have
\ba{
\mbox{KLUB}(z_t,x_0) &= \mbox{KL}(\mathcal{N}(\sqrt{\alpha_{t}}x_0,1- \alpha_{t})||\mathcal{N}(\sqrt{\alpha_{t}}f(z_t,t),1- \alpha_{t})\notag\\
& = \frac{\alpha_{t}}{1- \alpha_{t}}\|x_0-f(z_t,t)\|^2_2\notag\\
& =\mbox{SNR}_t\|x_0-f(z_t,t)\|^2_2\notag\\
&=\|\epsilon_0-\epsilon_{\theta}(z_t,t)\|_2^2.\notag
}
It is surprising to find out that the KLUB Marginal is identical to the SNR weighted $-$ELBO first introduced in \citet{ddpm} and further discussed in \citet{hang2023efficient}.

\subsection{Optimization of Gaussian Diffusion via KLUB}
Following beta diffusion, we can also combine two KLUBs as
\ba{
&{\omega \mbox{KLUB}(s,z_t,x_0)+(1-\omega ) \mbox{KLUB}(z_t,x_0)}\notag\\ & = 
\left[\frac{\omega}{2}\left(\frac{\alpha_s}{1-\alpha_s}\frac{1-\alpha_t}{\alpha_t} - 1\right) + (1-\omega)\right]\|\epsilon_0-\epsilon_{\theta}(z_t,t)\|_2^2.\notag
}
Since $\frac{\alpha_s}{1-\alpha_s}\frac{1-\alpha_t}{\alpha_t}$ is in general close to 1 when $s$ is not too far from $t$, when $\omega$ is not too close to 1, a combination of these two KLUBs does not result in an algorithm that clearly differs from the use of an SNR-weighted negative ELBO.

\section{Illustration of Forward and Reverse Beta Diffusion }\label{sec:plotdiffusion}
\textbf{Illustration of Forward Beta Diffusion. }
We first visualize the beta forward diffusion process by displaying a true image $x_0$ and its noise-corrupted versions over the course of the forward diffusion process. 
Specifically, we display the noise corrupted and masked image at time $t=0,0.05,0.1\ldots,1$~as
\ba{ \textstyle
\tilde z_t = \max\left(\min\left(\frac{1}{{S_{cale}}}\left(\frac{z_t}{\alpha_t}-S_{hift}\right),1\right),0\right),~~z_t\sim\mbox{Beta}(\eta \alpha_t x_0, \eta(1-\alpha_t x_0)). \notag %
}

It is clear from Figure~\ref{fig:forward} that with multiplicative beta-distributed noises, the image becomes both noisier and sparser as time increases, and eventually becomes almost completely dark.
Thus the forward process of beta diffusion can be considered as simultaneously noising and masking the pixels. 
This clearly differs beta diffusion from Gaussian diffusion, whose forward diffusion gradually applies additive random noise and eventually ends at a Gaussian random noise. In addition, the reverse diffusion process of Gaussian diffusion can be considered a denoising process, whereas that of beta diffusion is simultaneously demasking and denoising the data, as illustrated below.

\textbf{Illustration of Reverse Beta Diffusion. }
We further illustrate the reverse process of beta diffusion by displaying 
\ba{ \textstyle
\tilde z_{t_{j-1}} = \max\left(\min\left(\frac{1}{{S_{cale}}}\left(\frac{z_{t_{j-1}}}{\alpha_{t_{j-1}}}-S_{hift}\right),1\right),0\right),\notag %
}
where $z_{t_{j-1}} = z_{t_{j}}+(1-z_{t_{j}})p_{(t_{j-1}\leftarrow t_j)}$ is iteratively computed as in Algorithm~\ref{algo:sample}.
We also display
 \ba{
 \hat x_0 = f_{\theta}(z_{t_j},t_j)\approx \E[x_0\given z_{t_j}],\notag
 }
 where the approximation would become more and more accurate when $\theta$ approaches its theoretical optimal~$\theta^*$, under which we have $f_{\theta^*}(z_{t_j},t_j)= \E[x_0\given z_{t_j}]. $
 
 As shown in Figure~\ref{fig:backward}, starting from a random image drawn from 
 $$
 z_{t_{J}}\sim \mbox{Beta}(\eta\alpha_{t_{J}} \hat x_0,\eta(1-\alpha_{t_{J}} \hat x_0)),~\hat x_0 = \E[x_0],
 $$
 most of whose pixels would be completely dark, beta diffusion gradually demasks and denoises the image towards a clean image through multiplicative transforms, as shown in Algorithm~\ref{algo:sample}.

\section{Additional Results for Synthetic Data}\label{app:toy}

\textbf{Quantitative Evaluation Metrics. } We consider Wasserstein-1, Jensen--Shannon divergence (JSD), and Hellinger distance as three complimentary evaluation metrics. Wasserstein-1 has a high tolerance of misalignment between the supports of the true data density and
these of the generated data. In other words, it is not sensitive to both misaligning the modes of the true density and these of the generated density and placing density in the regions where there are no data. By contrast, both JSD and Hellinger can well reflect the misalignments between the supports of high data density regions. 

\textit{Wasserstein-1}: We monitor the performance of different algorithms during training by generating 100k data points using the trained model and drawing 100k data points from the data distribution, and computing the \textit{Wasserstein-1} distance between their empirical distributions, which can be done by sorting them and taking the mean of their element-wise absolute differences. If the true distribution is discrete in that it takes values uniformly at random from a discrete set $\mathcal{D}$, then we compute the Wasserstein-1 distance by
sorting a set of 10k generated samples from small to large and calculating the mean of their absolute element-wise differences with a non-decreasing vector of the same size, which consists of an equal number of copies for each unique value in $\mathcal{D}$. 

\textit{JSD and Hellinger distance: } We discretize the 100k generated data into 100 equal-sized bins between 0 and 1 and compute the frequency of the data in each bin, which provides an empirical probability mass function (PMF). We then compute both the JSD and Hellinger distance between the empirical PMF of the generated data and the empirical (true) PMF of the true (discrete) data.

\textbf{Additional Results. } The observations in Figure~\ref{fig:pmf} are further confirmed by Figure~\ref{fig:distance_plot}, which shows that while Gaussian diffusion has good performance measured by the Wasserstein-1 distance, it is considerably worse than beta diffusion in aligning the supports between the true and generated data distributions. Within beta diffusion, either KLUB or its argument-swapped version leads to good performance measured by both the JSD and Hellinger distance, suggesting their excellent ability to concentrate their generations around the true data supports. However, the proposed KLUB has a much better recovery of the true underlying density in comparison to its argument-swapped version, as confirmed by examining the first subplot in Figure~\ref{fig:distance_plot} and comparing the second and third subplots in~Figure~\ref{fig:pmf}.

\begin{figure}[t]
 \centering
 \includegraphics[width=1\linewidth]{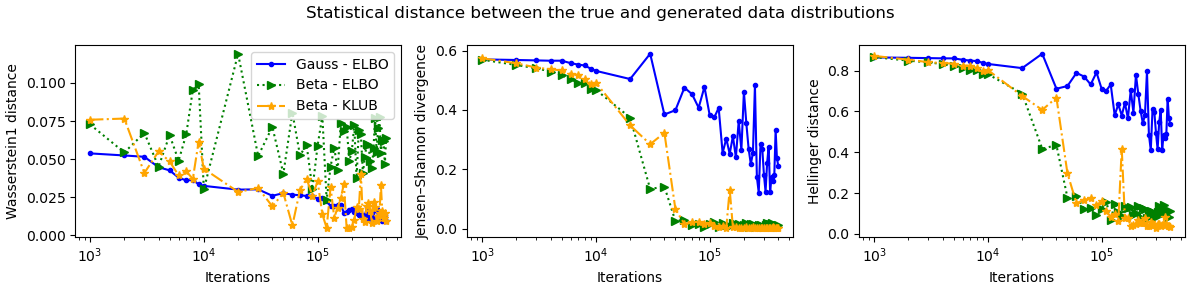}
 \caption{\small Comparison of the statistical distances between the true and generated data distributions over the course of training. The blue, green, and orange curves are for ``Gauss ELBO,'' ``Beta ELBO,'' and ``Beta KLUB,'' respectively. From the left to right are the plots for Wasserstein-1 distance, Jensen--Shannon divergence, and Hellinger distance, respectively. }
 \label{fig:distance_plot}
 \vspace{1mm}

 \centering
 \includegraphics[width=1\linewidth]{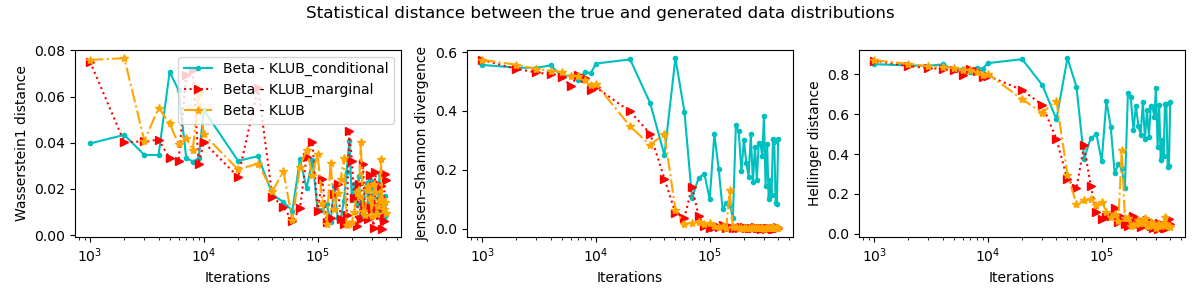}
 \vspace{-3mm}
 \caption{\small Analogy to Figure~\ref{fig:distance_plot} for comparing ``KLUB Conditional,'' ``KLUB Marginal,'' and ``KLUB.'' }
 \label{fig:distance_ablation}
\end{figure}

We further conduct an ablation study between KLUB and its two variants: ``KLUB Conditional'' and ``KLUB Marginal,'' corresponding to $\omega=1$ and $\omega=0$, respectively, in the loss given by \eqref{eq:loss}. The results suggest ``KLUB Conditional'' and ``KLUB Marginal'' have comparable performance, which is not that surprising considering that in theory, they both share the same optimal solution given by~\eqref{eq:f_theta}.
More closely examining the plots suggests that ``KLUB Marginal'' converges faster than ``KLUB Conditional'' in aligning the generation with the data supports, but eventually delivers comparable performance, and combining them leads to the ``KLUB'' that has a good overall performance.

We note we have conducted additional experiments by varying model parameters and adopting different diffusion schedules.
Our observation is that while each algorithm could do well by carefully tuning these parameters, ``%
{Beta KLUB}'' is the least sensitive to parameter variations and is consistently better than or comparable to the other methods across various combinations of model parameters. While the toy data is valuable for showcasing the unique properties of beta diffusion, the performance of ``%
{Beta KLUB}'' is not that sensitive to model parameters, and the observed patterns may be disrupted by image-specific settings. Hence, their utility in tuning beta diffusion for image generation, where data dimension and model size/architecture are much larger and more complex, is limited.

\begin{figure}[t]
 \centering
 \includegraphics[width=1\linewidth]{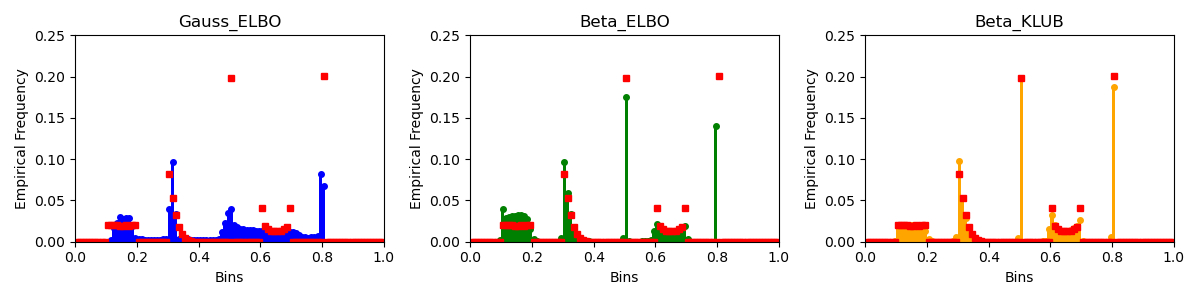}
 \vspace{-3mm}
 \caption{\small Analogy to Figure~\ref{fig:pmf} for the mixture of continuous range-bounded data and point masses.}
 \label{fig:pmf2}
 \vspace{2mm}

 \centering
 \includegraphics[width=1\linewidth]{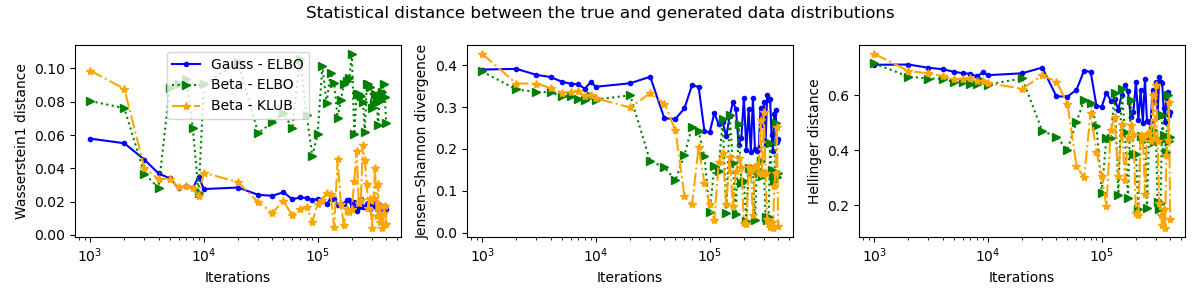}
 \vspace{-3mm}
 \caption{\small Analogy to Figure~\ref{fig:distance_plot} for the mixture of continuous range-bounded data and point masses.}
 \label{fig:distance_plot2}
 \vspace{2mm}

 \centering
 \includegraphics[width=1\linewidth]{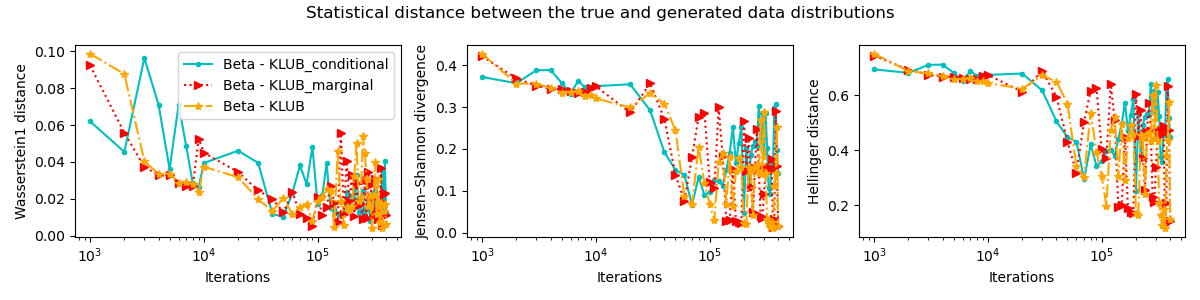}
 \vspace{-3mm}
 \caption{\small Analogy to Figure~\ref{fig:distance_plot2} for comparing ``KLUB Conditional,'' ``KLUB Marginal,'' and ``KLUB.'' }
 \label{fig:distance_ablation1}
\end{figure}

\subsection{Mixture of Continuous Range-bounded Distributions and Point Masses}
We consider an equal mixture of three range-bounded distributions and two unit point masses, including a Uniform distribution between 0.1 and 0.2, a Beta(1,5) distribution first scaled by 0.1 and then shifted by 0.3, a unit point mass at $x_0=0.5$, a Beta(0.5,0.5) distribution first scaled by 0.1 and then shifted by 0.6, and a unit point mass at $x_0=0.8$. A random sample is %
generated as %
\ba{
&\textstyle x_0 = y_{k},~k\sim \mbox{Discrete}(\{1,2,3,4,5\}),~y_1\sim\mbox{Unif}(0.1,0.2),~y_2\sim \frac{1}{0.1}\mbox{Beta}(\frac{y_2-0.3}{0.1};1,5),\notag\\
&\textstyle y_3 = 0.5, ~y_4\sim \frac{1}{0.1}\mbox{Beta}(\frac{y_4-0.6}{0.1};0.5,0.5), ~y_5=0.9.
}
We follow the same experimental protocol in Section~\ref{sec:toy1}. The results are shown in Figures~\ref{fig:pmf2}~and~\ref{fig:distance_plot2}, from which we again observe that ``Gauss ELBO'' does a poor job in aligning its generation with the true data supports and places a proportion of its generation to form smooth transitions between the boundaries of high-density regions; ``Beta ELBO'' well captures the data supports and density shapes, but has a tendency to overestimate the density of smaller-valued data; whereas ``Beta KLUB'' very well aligns its generated data with the true data supports and captures the overall density shapes and the sharp transitions between these five range-bounded density regions, largely avoiding placing generated data into zero-density regions.

\section{Preconditioning of Beta Diffusion for Image Generation}\label{app:precond}
It is often a good practice to precondition the input of a neural network. For beta diffusion, most of our computation is operated in the logit space and hence we will consider how to precondition $\mbox{logit}(z_t)$ before feeding it as the input to the generator $f_\theta$. Specifically,
since we have
\ba{
z_t \sim \mbox{Beta}(\eta \alpha_t x_0,\eta(1-\alpha_tx_0)),\notag
}
we can draw the logit of $z_t$ as
\ba{
\mbox{logit}(z_t) = \ln u_t-\ln v_t,~~u_t\sim \mbox{Gamma}(\eta \alpha_t x_0,1),~~v_t\sim \mbox{Gamma}(\eta(1-\alpha_tx_0),1).\notag
}
Assume $x_0\sim \mbox{Unif}[x_{\min},x_{\max}]$, where $x_{\min} = S_{hift}$ and $x_{\max} = S_{cale}+S_{hift}$,
we have
\ba{
&\E[\mbox{logit}(z_t)] = \E[\ln u_t]-\E[\ln v_t] \notag\\
&= \E_{x_0}[\psi(\eta \alpha_t x_0) ] - \E_{x_0}[\psi(\eta(1-\alpha_tx_0)) ],\notag
}
where
\ba{
\E_{x_0}[\psi(\eta \alpha_t x_0) ]&= \frac{1}{\eta \alpha_t(x_{\max}-x_{\min})} \int_{\eta \alpha_t{x_{\min}}}^{\eta \alpha_t{x_{\max}}} d\ln\Gamma(z) \notag\\
&=\frac{1}{\eta \alpha_t(x_{\max}-x_{\min})} [ \ln\Gamma(\eta \alpha_tx_{\max})-\ln\Gamma(\eta \alpha_tx_{\min})], \notag
}
\ba{
\E_{x_0}[\psi(\eta(1-\alpha_tx_0)) ]&= \frac{1}{\eta \alpha_t(x_{\max}-x_{\min})} \int_{\eta (1-\alpha_t{x_{\max}})}^{\eta(1- \alpha_t{x_{\min})}} d\ln\Gamma(z)\notag\\
&=\frac{1}{\eta \alpha_t(x_{\max}-x_{\min})} [ \ln\Gamma(\eta(1- \alpha_tx_{\min})) - \ln\Gamma(\eta(1- \alpha_tx_{\max})) ]. \notag
}
We further estimate the variance of $\mbox{logit}(z_t)$ as
\ba{
&\mbox{var}[\mbox{logit}(z_t)] = \mbox{var}[\ln u_t]+\mbox{var}[\ln v_t]\notag\\
& = \E_{x_0}[\psi^{(1)}(\eta \alpha_t x_0) ] + \E_{x_0}[\psi^{(1)}(\eta(1-\alpha_tx_0)) ] + \mbox{var}_{x_0}[\psi(\eta \alpha_t x_0) ] + \mbox{var}_{x_0}[\psi(\eta(1- \alpha_t x_0)) ] \notag}
where 
\ba{
\E_{x_0}[\psi^{(1)}(\eta \alpha_t x_0) ]
&=\frac{1}{\eta \alpha_t(x_{\max}-x_{\min})} [ \psi(\eta \alpha_tx_{\max})-\psi(\eta \alpha_tx_{\min})] \notag,
}
\ba{
\E_{x_0}[\psi^{(1)}(\eta(1-\alpha_tx_0)) ] =\frac{1}{\eta \alpha_t(x_{\max}-x_{\min})} [
\psi(\eta(1- \alpha_tx_{\min})) - \psi(\eta(1- \alpha_tx_{\max})) ], \notag
}
\ba{
&\mbox{var}_{x_0}[\psi(\eta \alpha_t x_0) ]=\E_{x_0} [\psi^2(\eta \alpha_t x_0) ] - (\E_{x_0} [\psi(\eta \alpha_t x_0) ] )^2\notag\\
&\approx \max\left( \frac{1}{100}\sum_{i=0}^{100} \frac{\psi^2\left(\eta \alpha_t \left( x_{\min}+\frac{i}{100}({x_{\max}-x_{\min}})\right)\right) }{2^{
\mathbf{1}(i=0)+\mathbf{1}(i=100)}}
- (\E_{x_0} [\psi(\eta \alpha_t x_0) ] )^2,~~0\right), \notag
}
\ba{
&\mbox{var}_{x_0}[\psi(\eta(1- \alpha_t x_0)) ]=
\E_{x_0}[\psi^2(\eta(1- \alpha_tx_0))] - ( \E_{x_0}[\psi(\eta(1-\alpha_tx_0)) ])^2\notag\\
&\approx \max\left(\frac{1}{100}\sum_{i=0}^{100} \frac{\psi^2\left(\eta\left(1- \alpha_t \left( x_{\min}+\frac{i}{100}({x_{\max}-x_{\min}})\right)\right)\right)}{2^{
\mathbf{1}(i=0)+\mathbf{1}(i=100)}} -( \E_{x_0}[\psi(\eta(1-\alpha_tx_0)) ])^2,~~0\right). \notag
}

Now we are ready to precondition $\mbox{logit}(z_t)$ as
\ba{
g(z_t) = \frac{ \mbox{logit}(z_t) -\E[\mbox{logit}(z_t)]}{\sqrt{ \mbox{var}[\mbox{logit}(z_t)] }}. \notag
}
We use $g(z_t) $
 as the input to the generator $f_{\theta}$ and add a skip connection layer to add $g(z_t)$ into the output of the generator. Specifically, following the notation of EDM, we define the network as
 $$
 f_{\theta}(z_t,t)= c_{\text{skip}}(t)g(z_t)+c_{\text{out}}(t)F_{\theta}( c_{\text{in}}(t)g(z_t), c_{\text{noise}}(t)),
 $$
 where for simplicity, we set $c_{\text{skip}}(t)=c_{\text{out}}(t)=c_{\text{in}}(t)=1$ and $c_{\text{noise}}(t)=-\mbox{logit}(\alpha_t)/8$. A more sophisticated selection of these parameters has the potential to enhance the performance of beta diffusion. However, due to our current limitations in computing resources, we defer this investigation to future studies.

\section{Parameter Settings of Beta Diffusion on CIFAR10}\label{sec:parameter} 
We present the intuition on how we set the model parameters, including $S_{hift},S_{cale},\eta,\omega,\pi, c_0$, and $c_1$. Given the limitation of our computation resources, we leave the careful tuning of these model parameters to future work. 

We set $\omega=0.99$ and $\pi=0.95$ across all experiments conducted on CIFAR10 with beta diffusion. 

We pre-process the range-bounded data by linearly scaling and shifting them to lie between 
$[S_{hift},S_{hift}+S_{cale}]$, where $0<S_{hift}<S_{hift}+S_{cale}<1$. %
This linear transformation serves two purposes: firstly, it enables us to use beta diffusion to model any range-bounded data, and secondly, it helps avoid numerical challenges in computing the log-beta divergence when its arguments are small. %
When evaluating the performance, we linearly transform the generated data back to their original scale by reversing the scaling and shifting operations applied during the pre-processing step. For CIFAR10 images, 
as beta diffusion is diffusing pixels 
 towards $\mbox{Beta}(0,\eta)$, our intuition is to set both $S_{cale}$ and $S_{hift}$ large enough to differentiate the diffusion trajectories of different pixel values. However, $S_{cale}+S_{hift}$ needs to be smaller than 1, motivating us to choose $S_{cale}=0.39$ and $S_{hift}=0.60$. %

For the diffusion concentration parameter $\eta$, our intuition is that a larger $\eta$ provides a higher ability to differentiate different pixel values and allows a finer discretization of the reverse diffusion process, but leads to slower training and demands more discretized steps during sampling. 
We set $\eta=10000$ and the mini-batch size as $B=288$  by default, but also perform an ablation study on CIFAR10 with several different values, as shown in Table~\ref{tab:fid_NFE_extra}. 

We use a sigmoid-based diffusion schedule as $$\alpha_t = 1/(1 + e^{-c_0 - (c_1 - c_0) t}),$$ where we set $c_0=10$ and $c_1=-13$ by default. We note a similar schedule had been 
introduced in \citet{kingma2021variational} and \citet{jabri2022scalable} for Gaussian diffusion.

For the CIFAR-10 dataset\footnote{\url{https://www.cs.toronto.edu/~kriz/cifar.html}}, %
we utilize the parameterization of EDM\footnote{\url{https://github.com/NVlabs/edm}} \citep{karras2022elucidating} as the code base.
We replace the variance preserving (VP) loss and the corresponding network parameterization implemented in EDM, which is the weighted negative ELBO of Gaussian diffusion, with the KLUB loss of beta diffusion that is given by \eqref{eq:LossLoss}. 
We keep the generator network the same as that of VP-EDM, except that we set $c_{noise} = -\mbox{logit}(\alpha_t)/8 $ and simplify the other parameters as $c_{\text{skip}}=1$, $c_{\text{out}}=1$, and $c_{\text{in}}=1$. The image pixel values between 0 and 255 are divided by 255 and then scaled by $S_{cale}$ and shifted by $S_{hift}$, and hence we have $x_0\in [S_{hift}, S_{hift}+S_{cale}]$. The two inputs to the generator network $f_{\theta}(\cdotv,\cdotv)$, originally designed for VP-EDM and adopted directly for beta diffusion, are $g(\mbox{logit}(z_t))$ %
and $\mbox{logit}(\alpha_t)$, where $g(\cdotv)$ is a preconditioning function described in detail in Appendix~\ref{app:precond}. %
The output of the generator network is first transformed by a sigmoid function and then scaled by $S_{cale}$ and shifted by $S_{hift}$, and hence $\hat x_0 = f_{\theta}(z_t,t) \in [S_{hift}, S_{hift}+S_{cale}]$.

\end{document}